\renewcommand{\cite}[1]{\citep{#1}}
\DeclareMathOperator*{\argmax}{arg\,max}
\DeclareMathOperator*{\argmin}{arg\,min}
\newcommand{\e}[0]{\mathbb{E}\xspace}
\newcommand{\sign}[0]{\operatorname{sign}}
\newcommand{\indicator}[1]{\mathbbm{1}\{#1\}\xspace}
\newcommand{\abs}[1]{|#1|\xspace}
\newcommand{\norm}[1]{\|#1\|\xspace}
\newcommand{\sigmoid}[0]{\operatorname{sigmoid}}
\newcommand{\hx}[0]{h_\theta(X)\xspace}
\newcommand{\gx}[0]{g_{\theta}(X)\xspace}
\newcommand{\gxc}[2]{g_{#1}(#2)\xspace}
\newcommand{\regretj}[0]{\text{Regret}_J\xspace}
\newcommand{\regretl}[0]{\text{Regret}_L\xspace}
\newcommand{\asymregretl}[0]{\text{AR}_L\xspace}
\newcommand{\finitegmm}[0]{\textsc{FiniteGMM}\xspace}
\newcommand{\deepgmm}[0]{\textsc{ESPRM}\xspace}
\newcommand{\linearscenario}[0]{\textsc{Linear}\xspace}
\newcommand{\quadraticscenario}[0]{\textsc{Quadratic}\xspace}
\newcommand{\entropylearning}[0]{\textsc{ERM}\xspace}
\newcommand{\ie}{\emph{i.e.}}
\newcommand{\eg}{\emph{e.g.}}
\newcommand{\Eg}{\emph{E.g.}}
\newcommand{\ts}{\textstyle}
\newtheorem{lemma}{Lemma}
\newtheorem{assumption}{Assumption}
\newtheorem{theorem}{Theorem}
\title{Efficient Policy Learning from Surrogate-Loss Classification Reductions}
\author{
Andrew Bennett\\
Cornell University\\
\texttt{awb222@cornell.edu}
\and
Nathan Kallus\\
Cornell University\\
\texttt{kallus@cornell.edu}
}
\newif\ifpreprint
\begin{document}

\maketitle

\begin{abstract}
Recent work on policy learning from observational data has highlighted the importance of efficient policy evaluation and has proposed reductions to weighted (cost-sensitive) classification. But, efficient policy evaluation need not yield efficient estimation of policy parameters. We consider the estimation problem given by a weighted surrogate-loss classification reduction of policy learning with any score function, either direct, inverse-propensity weighted, or doubly robust. We show that, under a correct specification assumption, the weighted classification formulation need not be efficient for policy parameters. We draw a contrast to actual (possibly weighted) binary classification, where correct specification implies a parametric model, while for policy learning it only implies a semiparametric model. In light of this, we instead propose an estimation approach based on generalized method of moments, which is efficient for the policy parameters. We propose a particular method based on recent developments on solving moment problems using neural networks and demonstrate the efficiency and regret benefits of this method empirically.
\end{abstract}

\section{Introduction}

Policy learning from observational data is an important but challenging problem because it requires reasoning about the effects of interventions not observed in the data. For example, if we wish to learn an improved policy for medical treatment assignment based on observational data from electronic health records, we must take care to consider potential confounding: since healthier patients who were already predisposed to positive outcomes were likely to have historically been assigned less invasive treatments, na\"ive approaches may incorrectly infer that a policy of always assigning less invasive treatments will obtain better outcomes.

Various recent work has recently tackled this problem, known as policy learning from observational (or, off-policy) data, by optimizing causally-grounded estimates of policy value such as inverse-propensity weighting (IPW), doubly robust (DR) estimates, or similar \citep{qian2011performance,beygelzimer2009offset,kitagawa2018should,swaminathan2015counterfactual,zhao2012estimating,zhou2017residual,jiang2019entropy,kallus2018policy,kallus2018balanced,kallus2017recursive}.
In particular, \citet{athey2017efficient,zhou2017residual}, among others, highlight the importance of using \emph{efficient} estimates of policy value as optimization objectives, \ie, having minimal asymptotic mean-squared error (MSE).
Examples of efficient estimators are direct modeling or IPW when outcome functions or propensities are sufficiently smooth \citep{hirano2003efficient,hahn1998role}, or 
DR leveraging cross-fitting \citep{chernozhukov2017double} in more general non-parametric settings.

Regardless of which of the three estimates one uses, the resulting optimization problem amounts to a difficult binary optimization problem. Therefore many of the above leverage a reduction of this problem to weighted classification (for two actions; cost-sensitive classification more generally) and leverage tractable convex formulations that use surrogate loss functions for the zero-one loss, such as, for example, hinge loss \citep[which yields a weighted SVM]{zhao2012estimating,zhou2017residual} and logistic loss \citep[which yields a weighted logistic regression]{jiang2019entropy}. The recently proposed entropy learning approach of \citet{jiang2019entropy} is particularly appealing, since the logistic regression-based surrogate loss is smooth and therefore allows for statistical inference on the estimated optimal parameters.

However, as we here emphasize, even if we use policy value estimates that are efficient, this \emph{does not} imply that we obtain efficient estimation/learning of the optimal policy itself, \emph{even} if the surrogate-loss model is well-specified.
For example, in the case of logistic loss,
we demonstrate that, although logistic regression is statistically efficient for actual binary classification when well-specified (as is well-known), in the case of policy learning via a weighted-classification reduction, well-specification only implies a \emph{semi}-parametric model and therefore minimizing the empirical average of loss is \emph{not} efficient in this case.

On the other hand, the implications of correct specification can be summarized as a conditional moment problem. Such problems are amenable to efficient solution using approaches based on the generalized method of moments \citep[GMM;][]{hansen1982large}. We demonstrate what an efficient such estimate would look like, in terms of the efficient instruments for our specific policy learning problem. We propose a particular implementation of solving our problem based on recent work on efficiently solving conditional moment problems using a reformulation of the efficient GMM solution as a smooth game optimization problem, which can be solved using adversarial training of neural networks \citep{bennett2019deep}. In addition, we prove some results relating the efficiency of optimal policy estimation to the asymptotic regret of the surrogate loss, and also prove that under correct specification the regret of the surrogate loss upper bounds the true regret of policy learning.

We demonstrate empirically over a wide range of scenarios that our methodology indeed leads to greater efficiency, with lower MSE in estimating the optimal policy parameter estimates under correct specification. Furthermore, we demonstrate that in practice, both \emph{with} and \emph{without} correct specification, our methodology tends to learn policies with \emph{lower regret}, particularly in the low-data regime.

\subsection{Setting and Assumptions}

Let $X$ denote the context of an individual, $T \in \{-1,1\}$ the treatment assigned to that individual, and $Y$ the resultant outcome. In addition let $Y(t)$ denote the counterfactual outcome that would have been obtained for the corresponding individual if, possibly contrary to fact, treatment $t$ had been assigned instead. We assume throughout that we have access to logged data consisting of $n$ iid observations, $\mathcal S_n=\{(X_i,T_i,Y_i):i\leq n\}$, of triplet $(X,T,Y)$ generated by some behavior policy.

We make standard causal assumptions of consistency and non-interference, which can be summarized by assuming that $Y = Y(T)$. Furthermore, as is standard in the above policy learning literature, we assume that $X$ encapsulates all possible confounders, that is, $Y(t)\bot T\mid X\ \forall t \in \{-1,1\}$, as would for example be guaranteed if the logging policy is a function of the observed individual context.

A policy $\pi$ denotes a mapping from individual context to treatment to be assigned. Concretely, given individual context $x$, let $\pi(x)\in\{-1,+1\}$ denote the treatment assigned by policy $\pi$ (we may also consider stochastic policies but since optimal policies are deterministic we focus on these).

Let 
\begin{align*}J(\pi) &= \e[Y(\pi(X))]-\frac12\e[Y(+1)+Y(-1)]\\&=\e[\pi(X)(Y(+1)-Y(-1))]\end{align*}
denote the expected value of following policy $\pi$, relative to complete randomization. Given the logged data and some policy class $\Pi$, our task is to learn an \emph{optimal policy} from the class, defined by $\pi^* \in \argmax_{\pi \in \Pi} J(\pi)$ (notice that offsetting by the complete randomization policy does not affect this optimization problem). In particular we consider policy classes where each policy $\pi$ is indexed by some utility function $g$ and is defined by $\pi(x) = \sign(g(x))$, where in turn the utility functions are parametrized by $\theta\in\Theta\subseteq\mathbb R^d$ as $\mathcal G = \{g_\theta : \theta \in \Theta\}$, so that $$\Pi=\{\sign(g_\theta(x)):\theta\in\Theta\}.$$
Correspondingly, we define $$J(\theta)=J(\sign(g_\theta(\cdot)))=\e[\sign(g_\theta(X))(Y(+1)-Y(-1))]$$
and $\theta^*\in\argmax_{\theta\in\Theta}J(\theta)$.
A prominent example is linear decision rules, where $g_\theta(x)=\theta^Tx$. Other examples include decision trees of bounded depth and neural networks.

\subsection{Efficiency}
\label{sec:efficiency}

We briefly review what it means to estimate the optimal policy parameters, $\theta^*$, efficiently. For simplicity, suppose that $\theta^*$ is unique.
A \emph{model} $\mathcal M$ is some set of distributions for the data-generating process (DGP), \ie, a set of probability distributions for the triplet $(X,T,Y)$.
 
A model is generally non-parametric in the sense that this set of distributions can be arbitrary, infinite, and infinite dimensional. 

Consider any learned policy parameters $\hat\theta$, that is, a function of the data $\mathcal S_n$ with values in $\Theta$. 
Roughly speaking, we say that $\hat\theta$ is \emph{regular} if, whenever the data is generated from $(X_i,T_i,Y_i)\sim p\in\mathcal M$, we have that $\sqrt{n}(\hat\theta-\theta^*)$ converges in distribution to some limit as $n\to\infty$ and this limit holds in a particular locally uniform sense in $\mathcal M$ (see \citealp[Chapter 25]{van2000asymptotic} for a precise definition).
Semiparametric efficiency theory (see ibid.) then establishes that there exists a covariance matrix $V$ such that for any cost function $c:\mathbb R^d\to\mathbb R$ for which the sublevel sets are $\{v:c(v)\leq c_0\}$ are convex, symmetric about the origin, and closed, we have that
\begin{equation}\label{eq:efficiency}
\liminf_{n\to\infty} \e[c(\sqrt{n}(\hat\theta-\theta^*))]
\geq \e_{v\sim\mathcal N(0,V)}[c(v)]
\end{equation}
for any estimator $\hat\theta$ that is regular in $\mathcal M$. An important example is MSE, given by $c(v)=\|v\|_2^2$.

\emph{Efficient} estimators are those for which \cref{eq:efficiency} holds with \emph{equality} for all such functions $c$, which, by the portmanteau lemma, would be implied if the estimator has the limiting law $\sqrt{n}(\hat\theta-\theta^*)\Rightarrow\mathcal N(0,V)$. Regular estimators is a very general class of estimators so the bound in \cref{eq:efficiency} is rather strong. So much so that, in fact, \cref{eq:efficiency} holds in a local asymptotic minimax sense for \emph{all} estimators (see ibid., Theorem 25.21).

Efficiency is important because, in observational data, we only have the data that we have and cannot experiment or simulate to generate more so we should use the data optimally. \Cref{eq:efficiency} relates to the efficiency of \emph{estimating} $\theta^*$. In \cref{sec:optimalregret} we also relate this to \emph{regret}.

\subsection{Related Work}

There has been a variety of past work on the problem of policy learning from observational data. Much of this work considers formulating the objective of policy learning as a weighted classification problem \citep{beygelzimer2009offset,dudik2011doubly}, and either minimizing the 0-1 loss directly using combinatorial optimization \citep{athey2017efficient,kitagawa2018should,zhou2018offline}, using smooth stochastic policies to obtain a nonconvex but smooth loss surface \citep{swaminathan2015counterfactual}, or replacing the 0-1 objective with a convex surrogate to be minimized instead \citep{zhao2012estimating,zhou2017residual,jiang2019entropy,beygelzimer2009offset,dudik2011doubly}. In addition there is work that extends some of the above approaches to the continuous action setting \citep{kallus2018policy,krishnamurthy2019contextual,chernozhukov2019semi}; our focus will be solely on binary actions. Of these methods the convex-surrogate approach has the advantage of computational tractability and, when the convex surrogate is smooth \citep[\eg][]{jiang2019entropy}, the ability to perform statistical inference on the optimal parameters. Our paper extends this work by investigating how to solve the smooth surrogate problem efficiently. Although much of this past work has used objective functions for learning based on statistically efficient estimates of policy value \citep{dudik2011doubly,athey2017efficient,zhou2018offline,chernozhukov2019semi}, to the best of our knowledge our paper is novel in investigating the efficient estimation of the optimal policy parameters themselves.

In addition there has been a variety of past work on solving conditional moment problems (see \citet{khosravi2019non,bennett2019deep} and citations therein). Our paper builds on this work as it reformulates the problem of policy learning as a conditional moment problem, which we propose to solve using optimally weighted GMM \citep{hansen1982large} and DeepGMM \citep{bennett2019deep}.

\section{The Surrogate-Loss Reduction and Its Fisher Consistency}

In this section, we present the surrogate-loss reduction of policy learning and the implications of correct specification.

\label{sec:entropy-learning}

Many policy learning methods start by recognizing
that the policy value can be re-written as
\begin{equation}\label{eq:policyscoring}
J(\theta)=\e[\psi \sign(g_\theta(X))]
\end{equation}
where $\psi$ is any of the following score variables, which all depend on observables:
\begin{align}
    \ts\psi_{\text{IPS}} &= \ts\frac{TY}{e_T(X)},\quad
    \psi_{\text{DM}} = \mu_1(X) - \mu_{-1}(X), \nonumber \\
    \ts\psi_{\text{DR}} &= \ts\psi_{\text{DM}} + \psi_{\text{IPS}} - \frac{T\mu_T(X)}{e_T(X)},\label{eq:psi}
\end{align}
where $e_t(x) = P(T=t \mid X=x)$ and $\mu_t(x) = \e[Y(t) \mid X=x]$. 
\Cref{eq:policyscoring} arises once we recognize that all of these satisfy $\e[\psi \mid X] = \e[Y(1) - Y(-1) \mid X]$.

Then we can approximate \cref{eq:policyscoring} using its empirical version:
\begin{equation}\label{eq:policyscoring_n}
\ts J_n(\theta)=\frac1n\sum_{i=1}^n\psi_i \sign(g_\theta(X_i)).
\end{equation}
In particular, \citet{athey2017efficient,kitagawa2018should,zhou2018offline} prove bounds of the form $\sup_{\theta\in\Theta}\abs{J_n(\theta)-J(\theta)}=O_p(1/\sqrt{n})$ given that the policy class has bounded complexity. This shows that optimizing $\hat\theta\in\argmax_{\theta\in\Theta}J_n(\theta)$ provides near-optimal solutions in the original policy learning problem, since $J(\theta^*)-J(\hat\theta)\leq J(\theta^*)-J(\hat\theta)+J_n(\hat\theta)-J_n(\theta^*)\leq 2\sup_{\theta\in\Theta}\abs{J_n(\theta)-J(\theta)}$.
Given that in practice the nuisance functions $e_t$ and $\mu_t$ are estimated from data, we denote the corresponding score variable when such estimates are plugged in as $\hat\psi$ to differentiate it from the variable $\psi$ that uses the true nuisance functions. We correspondingly let $\hat J_n(\theta)=\frac1n\sum_{i=1}^n\hat\psi_i \sign(g_\theta(X_i))$. When $\hat J_n(\theta)$ is efficient for $J(\theta)$ one can generally additionally prove that $\sup_{\theta\in\Theta}\abs{\hat J_n(\theta)-J_n(\theta)}=o_p(1/\sqrt{n})$.

Given the non-convexity and non-smoothness of the empirical objective function \cref{eq:policyscoring_n} it is not necessarily clear how to actually optimize it, however. Many works \citep{jiang2019entropy,zhao2012estimating,beygelzimer2009offset} recognize that this optimization problem is actually equivalent to weighted binary classification (in our two-action case), since $\psi_i\sign(g_\theta(X_i))=\abs{\psi_i}(1-2\mathbb I_{\sign(g_\theta(X_i)\neq\psi_i})$, so any classification algorithm that accepts instance weights can perhaps be used to address \cref{eq:policyscoring_n}.
Specifically, many classification algorithms take the form of minimizing a \emph{convex surrogate loss}:
\begin{equation}\label{eq:sampleL}
\ts    L_n(\theta) = \frac{1}{n} \sum_{i=1}^n \abs{\psi_i} l(g_\theta(X_i), \sign(\psi_i)),
\end{equation}
where $l(g,s)$ acts as a surrogate for the zero-one loss $\mathbb I_{\sign(g_\theta(X_i)\neq\psi_i}$. Analogous to above, we let $L(\theta) = \e[\abs{\psi}l(g_\theta(X),\sign(\psi))]$ denote the population version of this loss. For classification, \citet{bartlett2006convexity} studies which losses are appropriate surrogates, \ie, are classification-calibrated.
The population version of the surrogate loss, which $L_n(\theta)$ is approximating, is 
\begin{equation}\label{eq:popL}L(\theta)=\e[\abs{\psi} l(g_\theta(X), \sign(\psi))].\end{equation}

Following \citet{jiang2019entropy}, we will focus on the logistic (or, logit-cross-entropy) loss function and define $l(g,s)$ everywhere as:\footnote{All of our results actually extend to any twice-differentiable classification-calibrated loss. Logistic is the most prominent such loss.}
\begin{equation*}
    l(g, s) = 2 \log(1 + \exp(g)) - (s + 1) g.
\end{equation*}
This loss is clearly smooth, and $l(g_\theta(x),s)$ is also convex in $\theta$ as long as $g_\theta(x)$ is convex in $\theta$ for each $x$. The loss is also classification-calibrated, which immediately yields the following, given an additional regularity assumption:
\begin{assumption}
\label{asm:psi-bounded}
$\e[\psi \mid X] = \e[Y(1) - Y(-1) \mid X]$, and $\e[\abs{\psi}] < \infty$.
\end{assumption}
\begin{theorem}[Fisher Consistency Under Correct Specification]
\label{thm:correct-specification}
Suppose the policy class $\Pi$ is correctly specified for the surrogate loss in the sense that
\begin{equation}\label{eq:specification}\mathcal G\cap\left({\argmin_{g~\text{unconstrained}}\e[\abs{\psi} l(g(X), \sign(\psi))]}\right)\neq\varnothing.\end{equation}
Then given \cref{asm:psi-bounded}, any minimizer of the surrogate-loss risk is an optimal policy:
\begin{align*}
&J(\theta^*)=\max_{\pi~\text{unconstrained}}J(\pi)\\
&\text{for all}~\theta^*\in\argmin_{\theta\in\Theta} L(\theta).
\end{align*}
\end{theorem}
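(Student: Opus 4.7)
The plan is to reduce the minimization of $L(\theta)$ to a pointwise conditional problem in $X$, solve that problem explicitly using the specific form of the logistic loss, and then transfer the conclusion from the unconstrained minimizer to $\mathcal{G}$ by invoking correct specification.

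First, I would rewrite
\begin{equation*}
L(\theta) = \mathbb{E}\bigl[\mathbb{E}[\,|\psi|\,l(g_\theta(X),\sign(\psi))\mid X]\bigr],
\end{equation*}
so that minimizing $L$ over all measurable $g$ reduces, at each $x$, to minimizing
\begin{equation*}
F_x(g) \;=\; \eta_+(x)\,l(g,+1) + \eta_-(x)\,l(g,-1),
\end{equation*}
where $\eta_\pm(x) = \mathbb{E}[\,|\psi|\,\mathbb{1}\{\sign(\psi)=\pm 1\}\mid X=x]$. Plugging in the explicit logistic loss gives
\begin{equation*}
F_x(g) = 2(\eta_+(x)+\eta_-(x))\log(1+e^g) - 2\eta_+(x)\,g,
\end{equation*}
which is strictly convex in $g$ whenever $\eta_+(x)+\eta_-(x)>0$, with unique minimizer $g^\star(x) = \log(\eta_+(x)/\eta_-(x))$. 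In particular, $\sign(g^\star(x)) = \sign(\eta_+(x)-\eta_-(x))$, and the definition of $\eta_\pm$ combined with \cref{asm:psi-bounded} yields
\begin{equation*}
\eta_+(x)-\eta_-(x) = \mathbb{E}[|\psi|\sign(\psi)\mid X=x] = \mathbb{E}[\psi\mid X=x] = \mathbb{E}[Y(1)-Y(-1)\mid X=x].
\end{equation*}
Hence the unconstrained pointwise minimizer agrees in sign with the Bayes-optimal policy $\pi^{\mathrm{Bayes}}(x) = \sign(\mathbb{E}[Y(1)-Y(-1)\mid X=x])$, which is known to maximize $J(\pi)$ over all unconstrained measurable policies (by the factorization $J(\pi) = \mathbb{E}[\pi(X)\,\mathbb{E}[Y(1)-Y(-1)\mid X]]$).

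Next, I would use the correct-specification hypothesis \cref{eq:specification} to promote this pointwise statement to a statement about $\Theta$. Correct specification furnishes some $\theta_0\in\Theta$ with $g_{\theta_0}$ attaining the unconstrained infimum of $L$, so $L(\theta_0) = \inf_g L(g)$. Therefore every $\theta^*\in\argmin_{\theta\in\Theta} L(\theta)$ also attains the unconstrained infimum. Since the inner map $g\mapsto F_x(g)$ is strictly convex on the set $\{x:\eta_+(x)+\eta_-(x)>0\}$, the strict convexity transfers to an almost-sure identification $g_{\theta^*}(X) = g^\star(X)$ on that set, and in particular $\sign(g_{\theta^*}(X)) = \sign(\mathbb{E}[Y(1)-Y(-1)\mid X])$ a.s.\ there.

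Finally I would verify that the remaining region $\{x:\eta_+(x)+\eta_-(x) = 0\}$ is irrelevant for $J$: on this set $\mathbb{E}[|\psi|\mid X=x]=0$, and by \cref{asm:psi-bounded} we have $|\mathbb{E}[Y(1)-Y(-1)\mid X=x]| \le \mathbb{E}[|\psi|\mid X=x] = 0$, so any choice of treatment contributes zero to $J$. Combining with the previous step gives $J(\theta^*) = J(\pi^{\mathrm{Bayes}}) = \max_{\pi\text{ unconstrained}} J(\pi)$, which is the claim.

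The main obstacle I anticipate is the bookkeeping around the degenerate set $\{\eta_+(x)+\eta_-(x)=0\}$, where $g^\star$ is not uniquely defined and the strict convexity argument breaks down; the saving grace is that \cref{asm:psi-bounded} forces $\mathbb{E}[Y(1)-Y(-1)\mid X]$ to vanish there, so those $x$ make no contribution to $J$ and any sign of $g_{\theta^*}$ is tolerable. Everything else is a routine first-order computation on the explicit form of the logistic loss.
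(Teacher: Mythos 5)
Your proposal is correct and follows essentially the same route as the paper: both reduce the problem to pointwise minimization of the conditional surrogate risk, derive the closed-form minimizer $\log(a_x/b_x)$ (your $\eta_\pm$ are exactly the paper's $a_x,b_x$), identify its sign with that of $\e[\psi\mid X=x]$, and use correct specification to show that any class-constrained minimizer attains the unconstrained infimum. If anything, your explicit treatment of the degenerate set $\{\e[\abs{\psi}\mid X]=0\}$ is slightly more careful than the paper's, which passes over that case silently.
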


\Cref{thm:correct-specification} establishes that, under correct specification, if we minimize the population surrogate loss, $L(\theta)$, then we obtain the optimal policy. Therefore, a natural strategy for policy learning would be to directly minimize the empirical loss $L_n(\theta)$, as was done by the above. Although the above arguments indicate that this approach would be computationally tractable, and also consistent under mild regularity conditions that ensure that optimizers of $L_n(\theta)$ would converge to optimizers of $L(\theta)$, it is not clear that it is statistically efficient, even if we use an efficient score variable for policy value estimation.

\section{The Conditional-Moment Reformulation of the Surrogate-Loss Reduction}\label{sec:reform}

In this section we establish a new interpretation of the surrogate-loss reduction as a conditional moment problem and we discuss the implications of this in terms of the model implied by correct specification. This will enable us to conduct efficiency analysis and to design algorithms with improved efficiency in the next section.

\subsection{The Conditional Moment Problem}

To make progress toward a characterization of efficiency under correct specification, we next establish an equivalent formulation of optimizing the population surrogate loss under correct specification as a conditional moment problem.

Define the derivative of $l$ with respect to $g$:
\begin{equation*}
    l'(g, s) = 2 \sigma(g) - (s + 1),
\end{equation*}
where $\sigma(g)=\exp(g)/(1+\exp(g))$ is the logistic function.

\begin{theorem}[Conditional Moment Problem Under Correct Specification]\label{thm:conditoinalmomentproblem}
Suppose \cref{asm:psi-bounded} holds and the policy class $\Pi$ is correctly specified for the surrogate loss in the sense that \cref{eq:specification} holds.
Define
$$
m(X;\theta)=\e \left[ 
\abs{\psi}  l'(g_\theta(x), \sign(\psi))
\mid X \right].
$$
Then we have that
\begin{align}
&\theta^*\in\argmin_{\theta\in\Theta}L(\theta)\notag
\\&\iff m(X;\theta^*)= 0
~~\text{almost surely}.\label{eq:conditional-moment}
\end{align}
\end{theorem}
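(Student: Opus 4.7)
The plan is to reduce the minimization of $L(\theta)$ to a pointwise-in-$X$ convex minimization problem in one real variable, and then recognize the conditional moment $m(X;\theta)$ as the first-order condition for that pointwise problem.

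\textbf{Step 1: Rewrite $L$ via the tower property.} Define $F(g, X) = \e[\,|\psi|\, l(g, \sign(\psi)) \mid X\,]$ so that, by iterated expectations, $L(\theta) = \e[F(g_\theta(X), X)]$ and, more generally, for any measurable function $g : \mathcal X \to \mathbb{R}$, $\e[\,|\psi|\, l(g(X), \sign(\psi))] = \e[F(g(X), X)]$. Since $l(\cdot, s)$ is smooth and strictly convex with bounded derivative $l'$, an application of dominated convergence shows that $F(\cdot, X)$ is convex with $\partial_g F(g, X) = \e[\,|\psi|\, l'(g, \sign(\psi)) \mid X\,] = m_g(X)$ (overloading notation: this is $m(X;\theta)$ when $g = g_\theta(X)$). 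Assumption~1 ($\e|\psi| < \infty$) and $|l'| \le 2$ ensure all expectations and differentiations are valid.

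\textbf{Step 2: Unconstrained minimum is pointwise.} For any measurable $g$, $F(g(X), X) \ge \inf_{u \in \mathbb{R}} F(u, X)$ almost surely, so
\[
\e[|\psi|\, l(g(X), \sign(\psi))] \ge \e[\inf_{u} F(u, X)] =: L^*.
\]
Equality holds iff $g(X) \in \argmin_u F(u, X)$ a.s. Because $F(\cdot, X)$ is convex and differentiable, a standard first-order characterization gives $g(X) \in \argmin_u F(u, X)$ a.s. iff $\partial_g F(g(X), X) = 0$ a.s.; on the null-contribution set $\{\e[|\psi|\mid X] = 0\}$ both conditions hold trivially since $|m(X;\theta)| \le 2\e[|\psi|\mid X]$, so there is no issue with the possible loss of strict convexity.

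\textbf{Step 3: Use correct specification to tie $\Theta$ to the unconstrained problem.} The correct-specification assumption \eqref{eq:specification} says there exists $\theta_0 \in \Theta$ with $g_{\theta_0}$ attaining the unconstrained minimum, hence $\min_{\theta \in \Theta} L(\theta) = L^*$. Therefore $\theta^* \in \argmin_{\theta \in \Theta} L(\theta)$ iff $L(\theta^*) = L^*$, and by Step~2 this happens iff $g_{\theta^*}(X)$ is pointwise minimizing a.s., iff $m(X;\theta^*) = \partial_g F(g_{\theta^*}(X), X) = 0$ a.s. This establishes the claimed equivalence.

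The main obstacle, and really the only nontrivial part, is Step~2: the justification that one can interchange $\inf$ and $\e$ to reduce the unconstrained loss minimization to a pointwise problem, together with checking that the first-order characterization of the pointwise minimizer remains valid even where $F(\cdot, X)$ degenerates to a constant (i.e., where $\e[|\psi| \mid X] = 0$). Everything else is bookkeeping: differentiation under the integral sign and invoking $\theta_0 \in \Theta$ from correct specification to identify the constrained and unconstrained minimum values.
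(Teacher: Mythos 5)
Your proposal is correct and follows essentially the same route as the paper: the paper factors the argument into a lemma showing the unconstrained minimizers are characterized pointwise by the first-order conditional moment condition (via convexity and dominated convergence, your Steps 1--2) and a lemma showing correct specification makes the constrained and unconstrained minimizer sets agree (your Step 3). Your explicit handling of the degenerate set $\{\e[\abs{\psi}\mid X]=0\}$ is a small point the paper leaves implicit, but otherwise the two proofs coincide.
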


\Cref{thm:conditoinalmomentproblem} arises straightforwardly from the observation that, under correct specification, $g_\theta(x)$ minimizes $\e[\abs{\psi_i} l(g_\theta(X), \sign(\psi))\mid X=x]$ for almost every $x$. Using smoothness and convexity, this latter observation is restated using first-order optimality conditions. Dominated convergence theorem allows us to exchange differentiation and expectation and we obtain the result.
\Cref{thm:conditoinalmomentproblem} provides an alternative characterization of $\theta^*$ as solving a \emph{conditional moment problem}.

Notice that \cref{eq:conditional-moment} is \emph{equivalent} to the statement that, for any square integrable function $f$ of $X$, we have the moment restriction
\begin{equation}
\label{eq:moment-restriction}
\e \left[ m(X;\theta) f(X) \right] = 0. 
\end{equation}
This alternative characterization makes the problem amenable to efficiency analysis.

Notice that by first-order optimality, if $\theta^*\in\operatorname{interior}(\Theta)$, optimizing $L(\theta)$ in \cref{eq:popL} exactly corresponds to solving the set of $d$ moment equations given by 
$\e[m(X;\theta)\hx]=0$.
Similarly, optimizing the empirical loss $L_n(\theta)$ in \cref{eq:sampleL} corresponds to solving these $d$ equations with population averages ($\e$) replaced with empirical sample averages.

However, \cref{eq:moment-restriction} gives a much broader set of equations. Leveraging this fact will be crucial to achieving efficiency. 
Indeed, it is well-known that even if a small number of moment equations are sufficient to identify a parameter (\eg, in the above, the $d$ equations identify $\theta^*$ via first-order optimality), taking into consideration additional moment equations that are known to hold can increase efficiency in semiparametric settings \citep{carrasco2014asymptotic}.

\subsection{The Semiparametric Model Implied by Specification}

In order to reason about efficiency, we need to reason about the model implied by \cref{eq:conditional-moment}. To do so, we first establish the following lemma:
\begin{lemma}
\label{lemma:semimodel}
    Assume \cref{asm:psi-bounded}. Then given a policy class $\Pi$, the model of DGPs (distributions on $(X,T,Y)$) where $\Pi$ is correctly specified for the surrogate loss (in the sense of \cref{eq:specification}) is given by all distributions on $(X,T,Y)$ for which there exists $\theta^* \in \Theta$ satisfying
    \begin{equation}\label{eq:semimodel}
        \frac{P(\psi > 0 \mid X)}{\e \left[\abs{\psi} \mid X \right]} 
        =\sigma(g_{\theta^*}(X))~~\text{almost surely}.
    \end{equation}
\end{lemma}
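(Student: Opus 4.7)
The strategy is to characterize the unconstrained argmin of the surrogate-loss risk pointwise in $X$, exploiting that the logistic loss is smooth and strictly convex in $g$ and that the risk factors through a conditional expectation given $X$. Correct specification then translates directly into a pointwise identity that $g_{\theta^*}$ must satisfy.

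First, I would write
$$L(g) = \e \bigl[ \Phi(g(X), X) \bigr], \qquad \Phi(g, x) = \e \bigl[ \abs{\psi}\, l(g, \sign(\psi)) \mid X = x \bigr],$$
and note that because $l(\cdot, \pm 1)$ is smooth and convex, so is $\Phi(\cdot, x)$ for each $x$. A measurable $g^*$ therefore attains $\inf_g L(g)$ over all measurable functions $g$ iff $g^*(x)$ minimizes $\Phi(\cdot, x)$ for a.e.\ $x$; on the exceptional set $\{\e[\abs{\psi} \mid X] = 0\}$, $\Phi$ is constant in $g$ and no constraint is placed on $g^*$.

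Next, I would compute the pointwise first-order condition. Using \cref{asm:psi-bounded} together with the boundedness of $l'(g, s) = 2\sigma(g) - (s+1) \in [-2,2]$, dominated convergence lets me interchange $\partial_g$ with the conditional expectation, giving
$$\partial_g \Phi(g, x) = \e \bigl[ \abs{\psi} (2\sigma(g) - \sign(\psi) - 1) \mid X = x \bigr].$$
Setting this to zero and solving for $\sigma(g^*(x))$ produces exactly the ratio appearing on the right-hand side of \eqref{eq:semimodel} on $\{\e[\abs{\psi}\mid X] > 0\}$, and strict monotonicity of $\sigma$ pins down $g^*(x)$ uniquely there.

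Finally, I would combine: $\mathcal G$ intersects the unconstrained argmin iff there is some $\theta^* \in \Theta$ with $g_{\theta^*}(X) = g^*(X)$ a.s.\ on $\{\e[\abs{\psi} \mid X] > 0\}$, which after applying $\sigma$ is exactly \eqref{eq:semimodel} (consistent with \eqref{eq:semimodel} being stated only a.s., since the exceptional set imposes no constraint). The main obstacle is purely bookkeeping around this exceptional null set together with a measurable-selection check, both routine since $g^*(x)$ is $\sigma^{-1}$ applied to a measurable ratio of conditional expectations. Both directions of the ``iff'' then fall out: specification $\Rightarrow$ the identity via the FOC above, and conversely any $\theta^*$ satisfying the identity realizes the pointwise minimum of $\Phi(\cdot, x)$ a.s., hence globally minimizes $L$.
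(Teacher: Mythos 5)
Your proposal is correct and follows essentially the same route as the paper: justify differentiating the conditional risk under the expectation via dominated convergence and the boundedness of $l'$, impose the pointwise first-order condition, and solve for $\sigma(g_{\theta^*}(x))$ to get the stated ratio — the paper packages the interchange step in its \cref{lemma:firstorder} and then performs the same algebra. Your explicit treatment of the degenerate set $\{\e[\abs{\psi}\mid X]=0\}$, the measurable-selection point, and both directions of the equivalence is, if anything, slightly more careful than the paper's.
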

This model is generally a \emph{semiparametric} model. That is, while \cref{eq:semimodel} is a parametric restriction on the function ${P(\psi > 0 \mid X=x)}/{\e \left[\abs{\psi} \mid X=x \right]}$, the set of corresponding distributions on $(X,T,Y)$ that satisfy this restriction is still infinite-dimensional and non-parametric.

\subsection{Comparison with Logistic Regression for Classification}

One question the reader might have at this point is why an approach different than empirical loss minimization is necessary for efficiency, given that the surrogate loss formulation seems mathematically identical to binary classification using logistic regression, which is known to be 
efficient.\footnote{This is because logistic regression performs maximum likelihood estimation (MLE), which is statistically efficient for well-specified parametric models.} The difference between the problems is that for actual classification we have that $\psi$ is a binary class label, \ie, $\psi \in \{-1, 1\}$. If we assume the policy class is well-specified and $\psi \in \{-1, 1\}$, the characterization of our semiparametric model from \cref{lemma:semimodel} reduces to
\begin{equation*}
    P(\psi = 1 \mid X)=\sigma(g_{\theta^*}(X)),
\end{equation*}
which implies that our model is \emph{parametric}, since the choice of $\theta^*$ now fully characterizes the distribution of the label $\psi$ given $X$. \Eg, usually for logistic regression we let $g_\theta(x)=\theta^Tx$ so that the above says that the logit of $P(\psi = 1 \mid X)$ is linear. Therefore, performing logistic regression corresponds to MLE for this parametric model, which is efficient. 

However in our general setting this is \emph{not} the case and there is a non-trivial nuisance space, since an infinite-dimensional space of conditional distributions for $\psi$ given $X=x$ could result in the \emph{same} function $P(\psi > 0 \mid X=x) / \e[\abs{\psi} \mid X=x]$. This suggests that we may need to be more careful in order to obtain efficiency and that there may exist estimators that are more efficient than empirical loss minimization.

\section{Efficient Policy Learning Reductions}

In this section we propose some efficient methods for policy learning based on the above conditional-moment formulation. In addition, we provide some analysis of these methods in terms of efficiency and regret.

\subsection{FiniteGMM Policy Learner}

We begin by proposing an approach based on using multi-step GMM to solve the conditional moment problem, which we will call \finitegmm. This approach works by optimally enforcing for the moment conditions given by \cref{eq:moment-restriction} for a finite collection of critic functions $\mathcal F =\{f_1,\ldots,f_k\}$. Specifically, given some initial estimate $\tilde \theta_n$ of $\theta^*$, define: 
\begin{align*}
    m(\theta)_j &=\ts \frac1n\sum_{i=1}^n\abs{\hat\psi_i} l'(g_\theta(X_i), \sign(\hat\psi_i)) f_j(X_i) \\
    C(\tilde \theta_n)_{jk} &=\ts \frac1n\sum_{i=1}^n\hat\psi_i^2 l'(\gxc{\tilde\theta_n}{X_i}, \sign(\hat\psi_i))^2 f_j(X_i) f_k(X_i) \\
    O(\theta;\tilde\theta_n) &= m(\theta)^T C(\tilde \theta_n) m(\theta).
\end{align*}
We then estimate $\theta$ by $\hat \theta_n = \argmin_{\theta} O(\theta;\tilde\theta_n)$. We can repeat this multiple times, plugging in $\hat\theta_n$ as $\tilde\theta_n$ and resolving.

An important issue with this estimator, however, is how to choose the critic functions. Standard GMM theory requires that the $k$ moment conditions are sufficient to identify $\theta^*$. And even then, the above is only the most efficient among estimators of the form $\argmin_{\theta} \norm{(m_1(\theta),\ldots,m_k(\theta)}$ for any norm $\norm{\cdot}$, but there may still be more efficient choices of critic functions.

\subsection{The Efficient Instruments for Policy Learning}
One nice result from the theory of conditional moment problems is the existence of a finite set of critic functions ensuring efficiency in the sense of \cref{sec:efficiency}. Define:
\begin{align*}
\label{eq:efficient-instruments}
    \Omega(x) &= \e[ \psi^2 l'(\gx, \sign(\psi))^2 \mid X=x ] \nonumber \\
    h_{\theta^*}(x) &= \nabla_{\theta} g_{\theta}(x)\mid_{\theta=\theta^*} \nonumber \\
    D(x) &= \e[ \nabla_{\theta} ( \abs{\psi} l'(\gx, \sign(\psi)) )\mid_{\theta=\theta^*}\mid X=x ] \nonumber \\
    &= \e[ \abs{\psi} l''(g_{\theta^*}(X), \sign(\psi)) h_{\theta^*}(x) \mid X=x]\nonumber \\
    f_i^*(x) &= \frac{D(x)_i}{\Omega(x)}.
\end{align*}
We call $\mathcal F^* = \{f_1^*,\ldots,f_d^*\}$ the \emph{efficient instruments}, and as long as the span of $\mathcal F$ contains these instruments then \finitegmm is guaranteed efficiency \citep{newey1993efficient}.

Given this, one approach would be to let $\mathcal F$ be flexible with the hope of approximately containing $\mathcal F^*$.
Letting, for example, $\mathcal F$ be the first $k(n)$ functions in a basis for $L_2$ such as a polynomial basis and letting $k(n)\to\infty$ can be shown to be efficient under certain conditions \citep{newey1993efficient}. This, however, can perform very badly in practice, especially with any reasonable amount of features.
Ideally, we would instead be able to make use of modern machine learning methods and approximate $\mathcal F^*$ using some flexible function class such as neural networks rather than defining a finite set of basis functions.

\subsection{ESPRM Policy Learner}
\label{sec:deepgmm}

Motivated by the above concerns, we now present our proposed approach: \deepgmm (efficient surrogate policy risk minimization).
This is based on the extension of \citet{bennett2019deep} to our conditional moment problem.
In the setting of instrumental variable regression,
\citet{bennett2019deep} proposes an adversarial reformulation of optimally-weighted GMM, which allows us to consider critic functions given by flexible classes such as neural networks. Then if this class provides a good approximation for the efficient instruments, this approach should be approximately efficient. 

Specifically, we define:
\begin{align*}
    u(X,\psi;\theta,f) &= \abs{\psi} l'(\gx, \sign(\psi)) f(X) \\
    U(\theta, f; \tilde\theta) &=\ts \frac1n\sum_{i=1}^nu(X_i,\hat\psi_i;\theta,f)] \\
    &\qquad\ts - \frac1{4n}\sum_{i=1}^nu(X_i,\hat\psi_i;\tilde\theta_n,f)^2,
\end{align*}
where as above $\tilde\theta_n$ is some initial consistent estimate of $\theta^*$. Then following \citet{bennett2019deep}, the \deepgmm estimator is defined as
\begin{equation*}\ts
    \hat\theta^{\deepgmm} = \argmin_\theta \sup_{f \in \mathcal G} U(\theta,f;\tilde\theta),
\end{equation*}
where $\mathcal F$ is our flexible function class (henceforth assumed to be a class of neural networks).
It remains to describe how this adversarial game is to be solved, and how to define $\tilde\theta_n$. As in \citet{bennett2019deep} we optimize the objective by performing alternating first-order optimization steps using the OAdam algorithm \citep{daskalakis2017training}, which was designed for solving smooth game problems such as generative adversarial networks (GANs). In addition, we continuously update $\tilde \theta_n$ during optimization, where at each step of alternating first order optimization we set $\tilde \theta_n$ equal to the previous iterate of $\hat \theta_n$.

\subsection{Efficient Learning implies Optimal Regret}\label{sec:optimalregret}

Finally we prove that efficiency not only ensures minimal MSE in estimating $\theta^*$ but also implies regret bounds. Let
\begin{align*}
    \regretj(\theta) &= \argmax_{\pi~\text{unconstrained}}J(\pi) - J(\theta) \\
    \regretl(\theta) &= L(\theta) - \inf_{\theta \in \Theta}L(\theta).
\end{align*}
\begin{theorem}[Regret Upper Bound]
\label{thm:regret-bound}
Suppose \cref{asm:psi-bounded} holds and that the policy class $\Pi$ is correctly specified for the surrogate loss in the sense that \cref{eq:specification} holds.
Then, for any $\theta \in \Theta$ we have:
\begin{equation*}
    \regretj(\theta) \leq \regretl(\theta).
\end{equation*}
\end{theorem}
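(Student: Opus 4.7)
The plan is to prove the inequality $X$-by-$X$ by writing both regrets as expectations with a common nonnegative conditional weight and then reducing to a pointwise classification-calibration bound for the logistic surrogate.

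First I would use \cref{thm:correct-specification} to identify $\theta^*$ with both $\argmin_\theta L(\theta)$ and an unconstrained maximizer of $J$, so that $\regretj(\theta) = J(\theta^*) - J(\theta)$. The first-order analysis behind \cref{thm:conditoinalmomentproblem} and \cref{lemma:semimodel} then gives $\sigma(g_{\theta^*}(X)) = \e[\psi_+ \mid X]/\e[\abs{\psi} \mid X]$ almost surely, so that $\e[\psi\mid X] = w(X)(2\eta(X)-1)$ with $w(X) := \e[\abs{\psi}\mid X] \geq 0$ and $\eta(X) := \sigma(g_{\theta^*}(X))$. Substituting into $J(\pi) = \e[\pi(X)\,\e[\psi\mid X]]$ and $L(\theta) = \e[w(X)\,C(g_\theta(X),\eta(X))]$, where $C(g,\eta) := \eta\,l(g,1) + (1-\eta)\,l(g,-1) = 2\log(1+e^g) - 2\eta g$, the two regrets can be rewritten as
\begin{align*}
\regretj(\theta) &= 2\,\e\big[w(X)\,\abs{2\eta(X)-1}\,\mathbbm{1}\{\sign(g_\theta(X)) \neq \sign(g_{\theta^*}(X))\}\big],\\
\regretl(\theta) &= \e\big[w(X)\,\big(C(g_\theta(X),\eta(X)) - C^*(\eta(X))\big)\big],
\end{align*}
where $C^*(\eta) := \inf_g C(g,\eta) = 2H(\eta)$ is twice the binary entropy in nats.

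Since $w(X) \geq 0$ appears as a common weight in both expectations, the theorem reduces to the pointwise calibration inequality
\[
2\abs{2\eta-1}\,\mathbbm{1}\{\sign(g) \neq \sign(2\eta-1)\} \;\leq\; C(g,\eta) - C^*(\eta) \qquad \forall\, g\in\mathbb R,\ \eta\in[0,1].
\]
By the label-swap symmetry $(\eta,g)\leftrightarrow(1-\eta,-g)$, I can assume $\eta\geq 1/2$; then the indicator forces $g\leq 0$, and convexity of $g \mapsto C(g,\eta)$ with minimizer at $g^* = \log(\eta/(1-\eta))\geq 0$ means that $C(\cdot,\eta)-C^*(\eta)$ is decreasing on $(-\infty,g^*]$, so the infimum of the right-hand side over the half-line $\{g\leq 0\}$ is attained at $g=0$. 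The claim therefore reduces to the single-variable inequality $2(2\eta-1) \leq 2\log 2 - 2H(\eta)$ on $\eta \in [1/2,1]$.

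The hard part will be this reduced calibration inequality: both sides vanish at $\eta=1/2$, so the comparison hinges on matching higher-order behavior and this is the key content of the theorem. A natural route is to study $\Delta(\eta) := 2\log 2 - 2H(\eta) - 2(2\eta-1)$ on $[1/2,1]$ by elementary calculus, using that $-H''(\eta) = 1/(\eta(1-\eta))$, together with the boundary values at $\eta=1/2$ and $\eta=1$; alternatively one can use the identity $C(g,\eta)-C^*(\eta) = 2\,\mathrm{KL}(\mathrm{Bern}(\eta)\,\Vert\,\mathrm{Bern}(\sigma(g)))$ and apply a sharpened, asymmetric Pinsker-type bound tailored to the misclassification regime. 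Once the one-variable bound is in hand, integrating against the common weight $w(X)$ yields the theorem.
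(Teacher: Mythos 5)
Your reduction is correct and in several places more careful than the paper's own argument: you track the factor of $2$ relating \regretj to the weighted zero--one regret, you use the correct numerator $\e[\abs{\psi}\indicator{\psi\ge 0}\mid X]$ for $\sigma(g_{\theta^*}(X))$, and the infimum of $C(\cdot,\eta)-C^*(\eta)$ over $\{g\le 0\}$ is indeed attained at $g=0$ by convexity. The genuine gap is the one-variable inequality you defer to ``elementary calculus'': it is false. Writing $u=2\eta-1\in(0,1]$ and $H_b(\eta)=-\eta\log\eta-(1-\eta)\log(1-\eta)$, your target is $2u\le 2\log 2-2H_b\bigl(\tfrac{1+u}{2}\bigr)$. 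But the right-hand side equals $u^2+u^4/6+\cdots$, so it is quadratic (not linear) at $u=0$ and is dominated there by $2u$; at the other end, $u=1$ gives $2\log 2\approx 1.386<2$. One checks that $\Delta(\eta)=2\log 2-2H_b(\eta)-2(2\eta-1)$ is strictly negative on all of $(1/2,1]$, so no sharpened Pinsker-type bound can rescue a linear comparison. What your reduction actually delivers, via $2\log 2-2H_b\bigl(\tfrac{1+u}{2}\bigr)\ge u^2$ and Jensen, is the square-root bound $\regretj\le 2\sqrt{\e[\abs{\psi}]\,\regretl}$, which is the standard calibration rate for logistic loss and strictly weaker than the stated theorem.

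This is not only a gap in your write-up: because your pointwise reduction is essentially tight, it yields a counterexample to \cref{thm:regret-bound} as stated. Take $X$ degenerate, $\psi=\pm 1$ with $\eta=P(\psi=1)$ close to $1$, and $\mathcal G=\{g_{\theta^*},g_\theta\}$ with $g_{\theta^*}\equiv\log(\eta/(1-\eta))$ (so \cref{eq:specification} holds) and $g_\theta\equiv-\epsilon$; then $\regretj(\theta)\to 2(2\eta-1)$ while $\regretl(\theta)\to 2\log 2-2H_b(\eta)<2(2\eta-1)$. The paper's own proof follows the same route through \citet{bartlett2006convexity} and founders on the same rock: the asserted identity $\tilde w(\theta)=\abs{\theta}$ is incorrect, since $\tilde w(\theta)=H^-\bigl(\tfrac{1+\theta}{2}\bigr)-H\bigl(\tfrac{1+\theta}{2}\bigr)=2\log 2-2H_b\bigl(\tfrac{1+\theta}{2}\bigr)$, which is quadratic at the origin and equals $2\log 2\ne 1$ at $\theta=1$ (and the paper additionally drops the factor of $2$ between \regretj and the zero--one regret). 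So you and the paper take essentially the same approach and hit the same obstruction; your version simply makes the obstruction visible rather than asserting it away, and the honest conclusion is the $O_p(1/\sqrt{n})$ regret guarantee $\regretj\le 2\sqrt{\e[\abs{\psi}]\,\regretl}$ rather than the claimed linear transfer.
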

This theorem implies that the regret of a policy is upper-bounded by the excess risk of the surrogate loss. Next, we make the following regularity assumption about the loss $L$:
\begin{assumption}[Well Behaved Loss]
\label{asm:regular-loss}
$L$ has a unique minimizer $\theta^*$ in the interior of $\Theta$, and the Hessian $H(\theta^*)$ of $L$ at $\theta^*$ is positive definite.
\end{assumption}
Given this assumption, a Taylor's theorem expansion yields $\regretl(\hat\theta_n) = (\hat\theta_n - \theta^*)^T H(\theta^*) (\hat\theta_n - \theta^*) + o(\norm{\hat\theta_n - \theta^*}^2)$. 
For for any regular estimator $\hat\theta_n$,
we can also define the \emph{asymptotic regret} $\asymregretl$ as the limiting distribution:
\begin{equation*}
    n \regretl(\hat\theta_n) \rightarrow_d \asymregretl(\hat\theta_n),
\end{equation*}
which exists since regularity implies that $\sqrt{n}(\hat\theta_n - \theta^*)$ has a limiting distribution. 
Given this we can prove the following optimality result of our efficient estimators in terms of asymptotic regret:

\begin{theorem}[Optimal Asymptotic Regret]
\label{thm:optimal-regret}
Given \cref{asm:regular-loss} and any non-negative, non-decreasing $\phi$, we define the risk $R_{\phi}(\hat\theta_n) = \e[\phi(\asymregretl(\hat\theta_n))]$. Given this, there exists a risk bound $B_{\phi}$ such that $R_{\phi}(\hat\theta_n) \geq B_{\phi}$ for every regular $\hat\theta_n$, with equality if $\hat\theta_n$ is semi-parametrically efficient.
\end{theorem}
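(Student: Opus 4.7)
The plan is to translate the inequality into a statement about quadratic forms in Gaussian limits and then combine the Hajek convolution theorem with Anderson's lemma. First, under \cref{asm:regular-loss}, the Taylor expansion stated just before the theorem gives
\[
n \regretl(\hat\theta_n) = Z_n^T H(\theta^*) Z_n + o_p(1), \qquad Z_n := \sqrt{n}(\hat\theta_n - \theta^*),
\]
since regularity forces $Z_n = O_p(1)$ and hence $n \cdot o(\norm{\hat\theta_n - \theta^*}^2) = o_p(1)$. Writing $Z_n \Rightarrow Z$ for the weak limit guaranteed by regularity, the continuous mapping theorem identifies $\asymregretl(\hat\theta_n)$ with the law of $Z^T H(\theta^*) Z$, so the question reduces to: over all admissible limit laws $Z$, which minimizes $\e[\phi(Z^T H(\theta^*) Z)]$?

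Second, I would invoke the semiparametric convolution theorem (\citealp[Theorem 25.20]{van2000asymptotic}) applied to the model characterized by \cref{lemma:semimodel}: for any regular $\hat\theta_n$, the limit decomposes as $Z \stackrel{d}{=} Z_0 + W$, where $Z_0 \sim \mathcal{N}(0, V)$ is the efficient Gaussian from \cref{sec:efficiency} and $W$ is independent of $Z_0$. Factoring the positive definite Hessian as $H(\theta^*) = M^T M$ with $M$ nonsingular rewrites the quadratic form as $Z^T H(\theta^*) Z = \norm{MZ_0 + MW}^2$, with $MZ_0$ centered Gaussian and $MW$ independent of it. Anderson's lemma then finishes the job: the sublevel sets of $z \mapsto \phi(\norm{z}^2)$ are balls (hence symmetric and convex) because $\phi$ is non-decreasing, so for any mean-zero Gaussian $G$ and deterministic $w$ one has $\e[\phi(\norm{G + w}^2)] \geq \e[\phi(\norm{G}^2)]$; conditioning on $W$ and integrating yields
\[
R_\phi(\hat\theta_n) \geq B_\phi := \e[\phi(Z_0^T H(\theta^*) Z_0)],
\]
with equality when $W \equiv 0$, i.e., when $\hat\theta_n$ is semiparametrically efficient.

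The main obstacle is verifying the hypotheses of the convolution theorem for the semiparametric model of \cref{lemma:semimodel}: we need differentiability in quadratic mean along the relevant tangent directions and pathwise differentiability of the functional extracting $\theta^*$ from the DGP. Under standard smoothness of the conditional law of $\psi$ given $X$ and of $g_\theta$ in $\theta$ this is routine, but it is the one step that really depends on the specifics of our model; once granted, the Taylor reduction and Anderson's lemma are mechanical. A secondary subtlety worth flagging is that the conclusion concerns expectation under the weak limit rather than $\lim \e[n\phi(\regretl(\hat\theta_n))]$, so no uniform integrability is required --- the portmanteau characterization of convergence in distribution is already built into the definition of $\asymregretl$.
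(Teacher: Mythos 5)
Your proposal is correct and follows essentially the same route as the paper's proof: the Taylor expansion plus continuous mapping to identify $\asymregretl(\hat\theta_n)$ with a quadratic form in the weak limit, the H\'ajek convolution theorem to decompose that limit as the efficient Gaussian plus an independent noise term, and Anderson's lemma (the paper invokes it via van der Vaart's Lemma~8.5 on bowl-shaped losses, where your factorization $H(\theta^*)=M^TM$ is just an explicit verification that $w\mapsto\phi(w^TH(\theta^*)w)$ is bowl-shaped) to conclude. The regularity caveat you flag about differentiability in quadratic mean is likewise left implicit in the paper.
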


Together with \cref{thm:regret-bound}, this means that both the actual regret ($\regretj$) and the surrogate regret ($\regretl$) of policies given by efficient estimators $\hat\theta$ are $O_p(1/n)$, and the surrogate regret has an optimal constant.

\section{Experiments}
\label{sec:experiments}

\begin{figure*}
\begin{center}
\ifpreprint
\begin{tabular}{c}
     \includegraphics[width=0.95\textwidth]{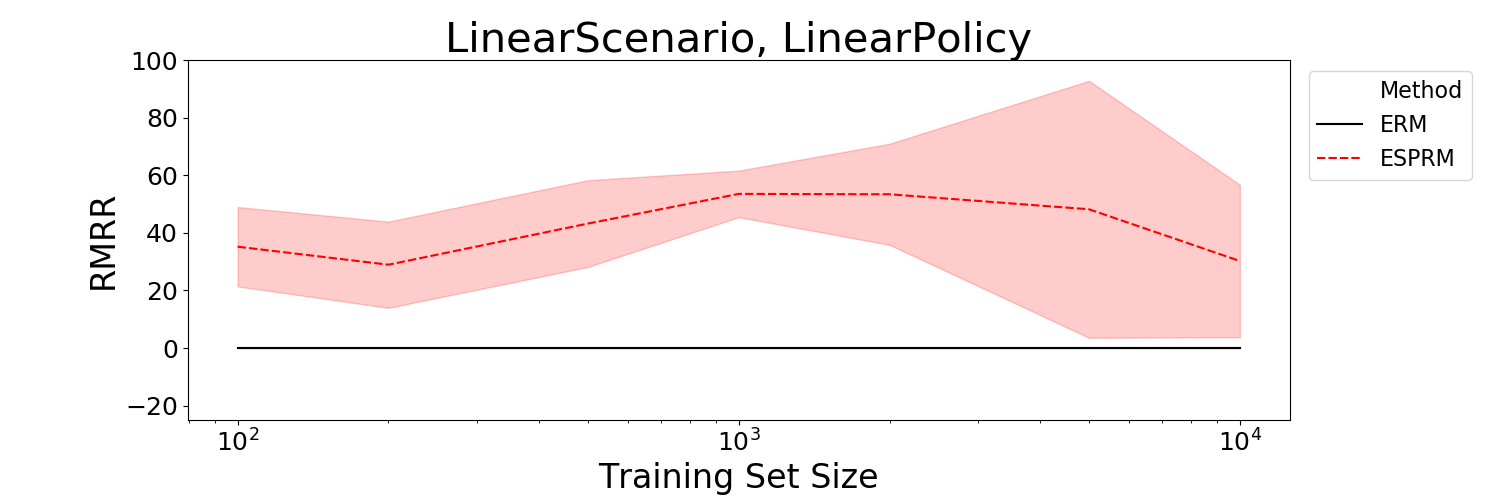} \\ 
     \includegraphics[width=0.95\textwidth]{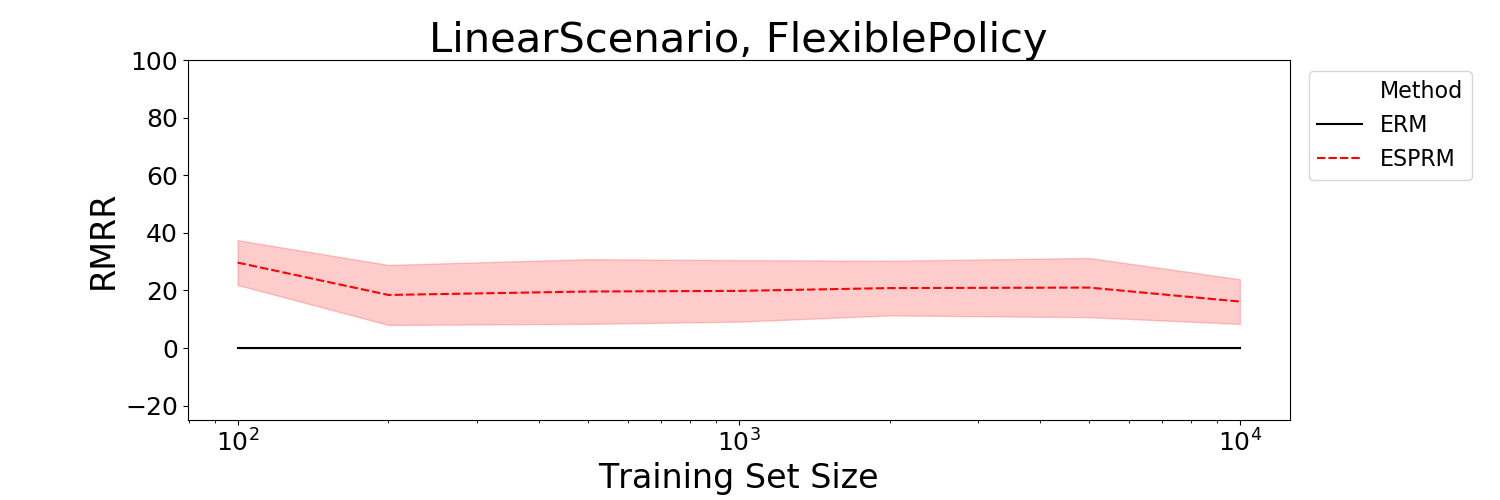} \\ 
     \includegraphics[width=0.95\textwidth]{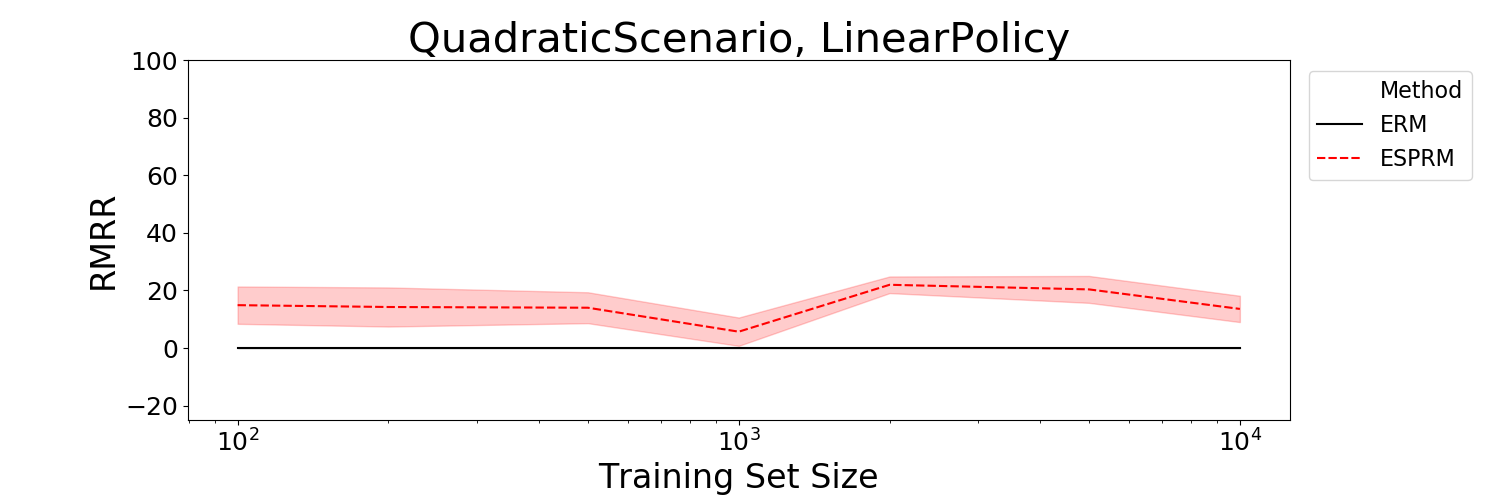} \\
     \includegraphics[width=0.95\textwidth]{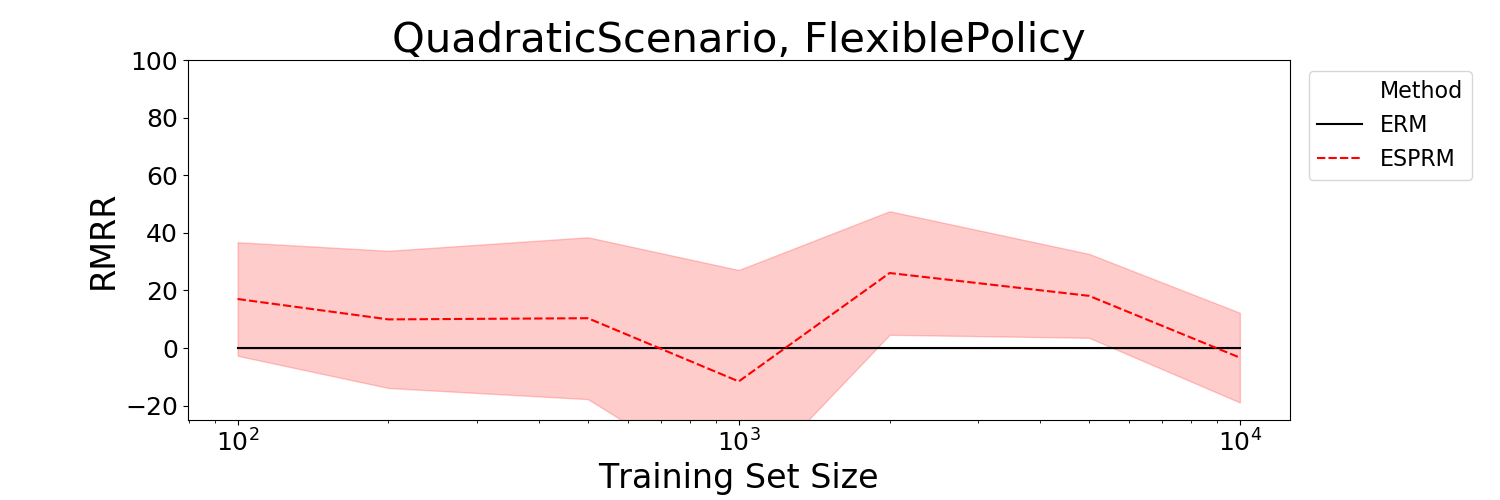}
\end{tabular}
\else
\begin{tabular}{cc}
     \includegraphics[width=0.45\textwidth]{"figs/simple_linear"} & \includegraphics[width=0.45\textwidth]{"figs/simple_flexible"} \\ 
     \includegraphics[width=0.45\textwidth]{"figs/quadratic_linear"} & \includegraphics[width=0.45\textwidth]{"figs/quadratic_flexible"}
\end{tabular}
\fi
\end{center}
\caption{Difference in performance between \deepgmm and \entropylearning. We plot RMRR against number of training examples for each combination of policy class and scenario kind. All shaded regions are 95\% confidence intervals constructed from bootstrapping.}
\label{fig:regret-plots}
\end{figure*}

\subsection{Synthetic Scenarios}

First we investigate the performance of our algorithms on a variety of synthetic scenarios. In all these scenarios $X$ is 2-dimensional, and $X$ and $Y(t) - \mu_t(X)$ are standard Gaussian distributed for each $t$; the scenarios only differ in the functions $\mu_t$ and $e_t$. In none of the scenarios is our policy class \emph{actually} well-specified in the sense of \cref{eq:specification}.

We consider the following kinds of synthetic scenarios:
\begin{itemize}
    \item \linearscenario: $\mu_t(x) = a_t^T x + a_{t0}$ and $e_1(x) = \sigmoid(b^Tx + b_0)$ for some vectors $a_{-1}$, $a_1$, $b$.
    \item \quadraticscenario: $\mu_t(x) = x^T A_t x + a_t^T x + a_{t0}$ and $e_1(x) = \sigmoid(x^T B x + b^Tx + b_0)$ for some symmetric matrices $A$ and $B$, and vectors $a_{-1}$, $a_1$, $b$.
\end{itemize}

In addition we experiment with the following policy classes: a \emph{linear} policy class, where $g_\theta(x) = \theta^Tx + \theta_0$, and a \emph{flexible} policy class where $g_\theta(x)$ is given by a fully-connected neural network with a single hidden layer of size 50, and leaky ReLU activations.

In all cases we use the surrogate loss method of \citet{jiang2019entropy} described in \cref{sec:entropy-learning} as a benchmark, which we henceforth refer to as \entropylearning. We note that although in the prior work they used $\hat \psi_{\text{IPS}}$, we instead use $\hat\psi_{\text{DR}}$, both because it is theoretically better grounded \citep{athey2017efficient,zhou2017residual} and we found that it gives stronger results for all methods.
For our \deepgmm method we let $\mathcal F$ be the same neural network function class as for flexible policies, and perform alternating first-order optimization as described in \cref{sec:deepgmm} for a fixed number of epochs.

For all methods, except where otherwise specified, we use the $\hat\psi_{DR}$ weights described in \cref{eq:psi}, with nuisance functions fit using correctly specified linear regression or logistic regression algorithms on a separately sampled tuning dataset of the same size as the training dataset.\footnote{By correctly specified we mean that for \linearscenario we fit using linear/logistic regression on $X$, whereas for \quadraticscenario we fit on a quadratic feature expansion of $X$.} We provide some additional results in the appendix where nuisances were instead fit via flexible neural networks, which show that this has little effect on our results. In all cases except for \deepgmm we perform optimization using LBFGS. Additional optimization details are given in the appendix.\footnote{Code for running all of our experiments is located at \url{https://github.com/CausalML/ESPRM}.}

For all configurations of scenario kind and policy we ran our experiments by sampling random scenarios of the respective kind, by setting all scenario parameters to be independent standard Gaussian variables. Specifically, for each $n \in \{100,200,500,1000,2000,5000,10000\}$ we sample 64 random scenarios of the respective kind, and for each random scenario we sample $n$ training data points and run all methods on this data. Results for \finitegmm, which generally did badly as predicted, are given in the appendix.

Define Relative Mean Regret Reduction (RMRR), given by:
\begin{equation*}
    RMRR(\hat\theta_n) = \left(1 - \frac{\e[\regretj(\hat\theta_n)]}{\e[\regretj(\hat\theta_n^{\text{ERM}]})]} \right)\times 100\%,
\end{equation*}
where each expectation in the fraction is taken over the joint distribution of randomly sampled scenario, and corresponding random estimate $\hat\theta$. Then for each scenario kind and policy class, we plot predicted RMRR against number of training data based on our \deepgmm estimates in \cref{fig:regret-plots}.
We see that \deepgmm consistently obtains policies on average that are lower regret or on-par than those obtained by \entropylearning (typically with around 10\% to 20\% RMRR), with 95\% confidence intervals indicating clearly better performance in every case except for training flexible policies on random \quadraticscenario scenarios (in which case performance seems roughly on par). It is notable that this even occurs in the \quadraticscenario setting with the linear policy class, where our policy class is not even well specified for the loss, let alone the surrogate loss.
We can also observe that the most significant regret benefits tend to occur with smaller training set sizes (since the same RMRR implies a larger absolute decrease in regret), indicating that the statistical efficiency of our method is leading to improved finite sample behavior.

\begin{figure}
\begin{center}
\ifpreprint
\begin{tabular}{c}
     \includegraphics[width=0.95\textwidth]{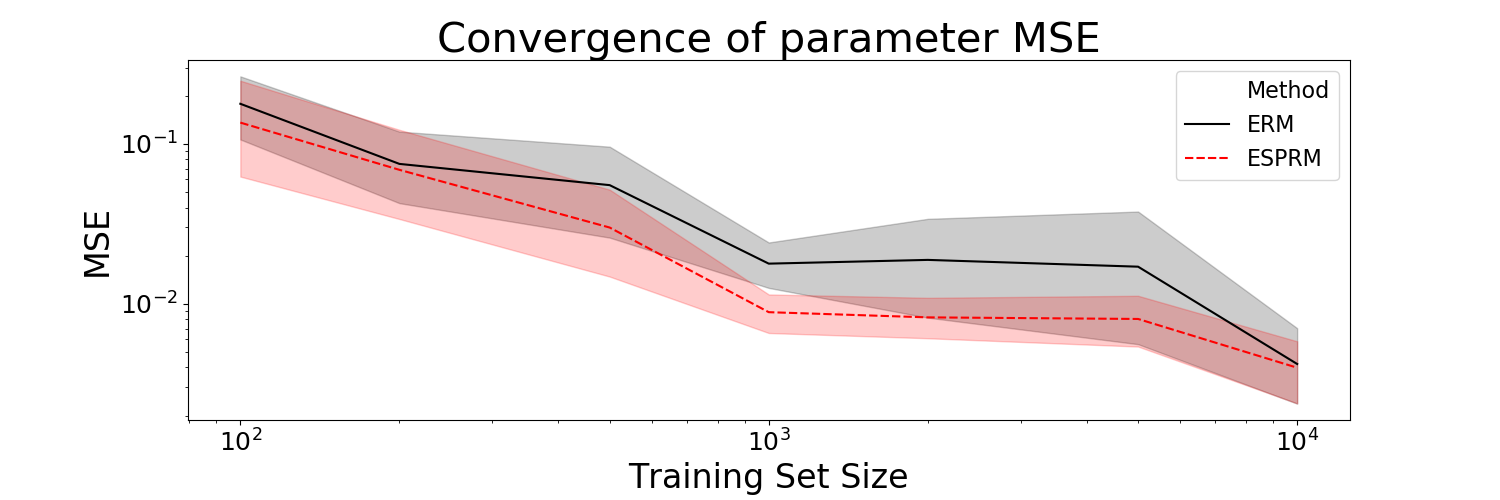}  \\ 
     \includegraphics[width=0.95\textwidth]{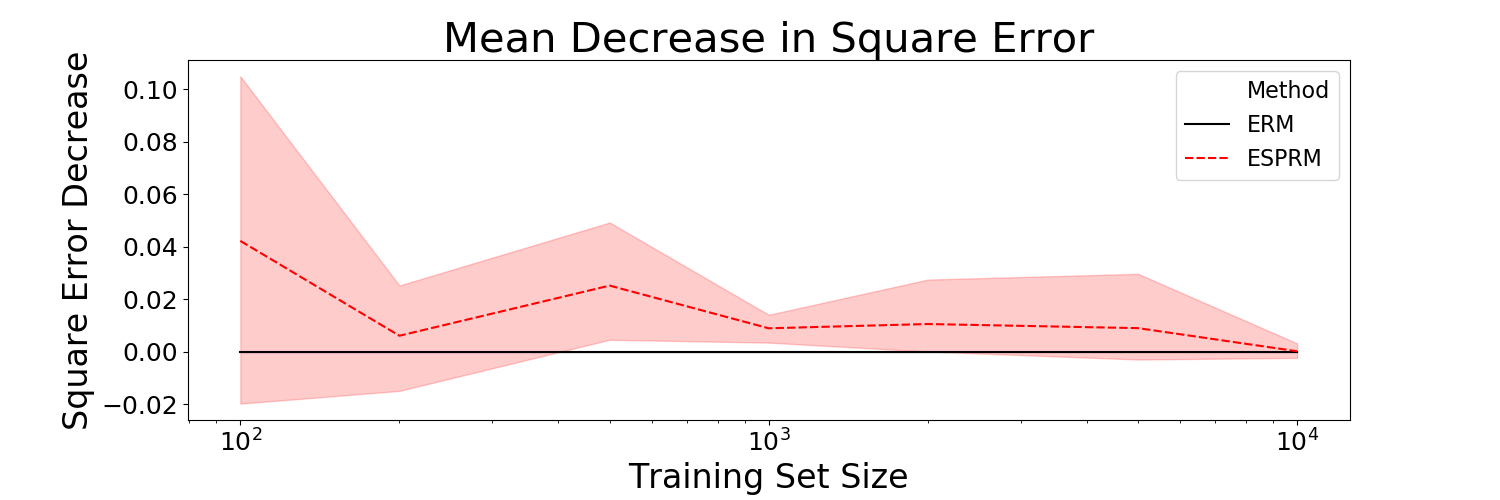} \\ 
\end{tabular}
\else
\begin{tabular}{c}
     \includegraphics[width=0.45\textwidth]{"figs/theta_mse_simple_linear"}  \\ \includegraphics[width=0.45\textwidth]{"figs/theta_mse_decraese_simple_linear"} \\ 
\end{tabular}
\fi
\end{center}
\caption{Above we plot the convergence in MSE of the predicted $\hat\theta_n$ for each method with a linear policy class, over the random scenarios of the \linearscenario class. Below we plot the average difference in the squared error of \deepgmm and \entropylearning (positive numbers indicate improvement over \entropylearning). All shaded regions are 95\% confidence intervals constructed from bootstrapping.}
\label{fig:mse-plots}
\end{figure}

In \cref{fig:mse-plots} we plot the convergence in terms of the MSE of the estimated parameter from \deepgmm and \entropylearning, for the \linearscenario setting and linear policy class (where parameters are low-dimensional and correctly specified). We plot both the MSE convergence, and the average difference in the squared error between the estimates, across the random scenarios.\footnote{All parameter vectors are normalized first given that the policy function is scale-invariant.} It is clear from these results that \deepgmm consistently estimates optimal policy parameters with lower squared error on average compared to \entropylearning across these random simulated scenarios. This provides strong evidence that the methodology indeed provides an improvement in statistical efficiency for solving the smooth surrogate loss problem.

\subsection{Jobs Case Study}

\begin{table}[t]
    \centering
    \begin{tabular}{lccc}
    \hline
    \textbf{Policy} & \textbf{\entropylearning} & \textbf{\deepgmm} & \textbf{Difference} \\
    \hline
    Linear & $-0.96 \pm 4.32$ & $4.42 \pm 3.78$ & $5.38 \pm 5.06$ \\
    Flexible & $-1.75 \pm 4.64$ & $7.68 \pm 3.16$ & $9.42 \pm 5.17$ \\
    \hline
    \end{tabular}
    \caption{Average predicted policy value (multiplied by 1000) for the Jobs case study for \entropylearning versus \deepgmm over 64 repetitions. The $\pm$ interval provides the 95\% confidence intervals.}
    \label{tab:job-results}
\end{table}

We next consider an application to a dataset derived from a large scale experiment comparing different programs offered to unemployed individuals in France \citep{behaghel2014private}.
We focus our attention to two arms from the experiment: a treatment arm where individuals receive an intensive counseling program run by a public agency and a treatment arm with a similar program run by a private agency. The hypothetical application is learning a personalized allocation to counseling program, with the aim of maximizing the number of individuals who reenter employment within six months, minus costs. (The original study's focus was not personalization.)
Our intervention is simply the offer of the counseling program; we therefore ignore the fact that some individuals offered one of the programs did not attend.

To make our policies focus on heterogeneous effects, we set the costs of each arm to be equal to their within-arm average outcome in the original data.
That is, the outcome we consider is equal whether one reentered employment within 6 months, minus the average number of individuals who entered employment within 6 months in that arm. 
The covariates we consider personalizing on are:
statistical risk of long-term unemployment,
whether individual is seeking full-time employment,
whether individual lives in sensitive suburban area,
whether individual has a college education,
the number of years of experience in the desired job, and
the nature of the desired job (\eg, technician, skilled clerical worker, etc.).

We then consider 64 replications of the following procedure.
Each time, we randomly split the data 40\%/60\% into train/test.
We then introduce some confounding into the training dataset. We consider the following three binary variables: whether individual has 1--5 years experience in the desired job, whether they seek a skilled blue collar job, and whether their statistical risk of long-term unemployment is medium. After studentizing each variable, we segment the data by the tertiles of their sum. In the first tertile, we drop each unit with probability $7/8$. In the second tertile, we drop private-program units with probability $1/4$ and public-program units with probability $7/8$. In the third tertile, we drop public-program units with probability $1/4$ and private-program units with probability $7/8$. Given a policy learned on this training data, we evaluate it on the held-out test set using a Horvitz-Thompson estimator.

Of the training data, $20\%$ was set aside for training nuisances, and an additional $20\%$ as validation data for early stopping.
We then trained both linear and flexible policies using \entropylearning and \deepgmm  as in our simulation studies, with the exception that nuisances were fitted using neural networks (of the same architecture as the flexible policy class).

We summarize the mean estimated outcome for the policies from each method in \cref{tab:job-results}.
We note from these values that on average \deepgmm seems to be learning higher value job-assignment policies than \entropylearning. Furthermore, performing paired two-sided $t$-tests on the two sets of repetitions for each policy to test for difference in mean policy value we obtained $p$-values of $.0429$ for the linear policy class and $.0007$ for the flexible policy class, clearly highlighting the benefit of our \deepgmm method.

\section{Conclusion}

We considered a common reduction of learning individualized treatment rules from observational data to weighted surrogate risk minimization. We showed that, quite differently from actual classification problems, assuming correct specification in the policy learning case actually suggests more efficient solutions to this reduction. 
In particular, even if we use efficient policy evaluation, this may not necessarily lead to efficient policy learning.
Specifically, under correct specification, the problem becomes a conditional moment problem in a semiparametric model and
efficiency here translates to both better MSE in estimating optimal policy parameters and improved regret bounds.

Based on this observation, we proposed an algorithm, \deepgmm, for efficiently solving the surrogate loss problem. We showed that our method consistently outperformed the standard method of empirical risk minimization on the surrogate loss, both over a wide variety of synthetic scenarios and in a case study based on a real job training experiment.

\clearpage

\bibliography{ref}
\bibliographystyle{icml2020}

\clearpage
\onecolumn
\appendix

\section{Omitted Proofs}

To prove \cref{thm:correct-specification,thm:conditoinalmomentproblem}, we first establish the following two useful lemmas.

Define 
\begin{align*}
\overline{\mathcal G}^*&={\argmin_{g~\text{unconstrained}}\e[\abs{\psi} l(g(X), \sign(\psi))]},\\
\mathcal G^*&={\argmin_{g\in\mathcal G}\e[\abs{\psi} l(g(X), \sign(\psi))]}.
\end{align*}
Correct specification, \cref{eq:specification}, is the assumption that $\mathcal G\cap \overline {\mathcal G}^*\neq\varnothing$.

\begin{lemma}\label{lemma:specification}
Suppose $\mathcal G\cap \overline {\mathcal G}^*\neq\varnothing$. Then $$\mathcal G\cap \overline{\mathcal G}^*={\mathcal G}^*.$$
\end{lemma}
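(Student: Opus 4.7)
The plan is to reduce the claim to a simple comparison of two infima. Write $L(g) = \e[\abs{\psi} l(g(X), \sign(\psi))]$, and let $L^* = \inf_{g~\text{unconstrained}} L(g)$ and $L^*_{\mathcal G} = \inf_{g \in \mathcal G} L(g)$. With this notation,
\begin{align*}
\overline{\mathcal G}^* &= \{g~\text{unconstrained} : L(g) = L^*\}, \\
\mathcal G^* &= \{g \in \mathcal G : L(g) = L^*_{\mathcal G}\},
\end{align*}
so showing equality of the two sets in the lemma amounts to showing $L^* = L^*_{\mathcal G}$ and then intersecting with $\mathcal G$.

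First I would note the trivial inequality $L^*_{\mathcal G} \geq L^*$, since the infimum over a subset is at least the infimum over the superset. Next, using the hypothesis $\mathcal G \cap \overline{\mathcal G}^* \neq \varnothing$, pick any element $g^*$ of this intersection. On the one hand $L(g^*) = L^*$ by definition of $\overline{\mathcal G}^*$, and on the other hand $L(g^*) \geq L^*_{\mathcal G}$ since $g^* \in \mathcal G$. Combining gives $L^* \geq L^*_{\mathcal G}$, so the two infima coincide.

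Finally I would conclude by set comparison: for any $g \in \mathcal G$, membership in $\mathcal G \cap \overline{\mathcal G}^*$ means $L(g) = L^*$, which by the equality of infima is equivalent to $L(g) = L^*_{\mathcal G}$, which is membership in $\mathcal G^*$. Thus the two sets are equal as claimed.

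There is really no substantive obstacle here; the lemma is essentially a bookkeeping statement that ``if the unconstrained optimum is achievable inside $\mathcal G$, then the constrained and unconstrained optima coincide,'' and the only thing to watch is being careful about the two distinct infima and writing the set characterizations explicitly before intersecting. No regularity assumptions on $\mathcal G$ or on $l$ are needed for this argument beyond what is implicit in the definitions of $\overline{\mathcal G}^*$ and $\mathcal G^*$.
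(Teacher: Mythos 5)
Your proof is correct and takes essentially the same route as the paper's: both hinge on using an element of $\mathcal G\cap\overline{\mathcal G}^*$ to show the constrained and unconstrained optimal values coincide, you by comparing the two infima directly and the paper by chaining the pointwise inequalities $c(g)\leq c(g^*)\leq c(g')$. The repackaging via $L^*=L^*_{\mathcal G}$ is a clean way to organize the same argument.
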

\begin{proof}
For brevity, let $c(g)=\e[\abs{\psi} l(g(X), \sign(\psi))]$.

Let any $g\in\mathcal G\cap \overline{\mathcal G}^*$ be given. Now let any $g'\in \mathcal G$ be given. Since $g\in\overline{\mathcal G}^*$, we have $c(g)\leq c(g')$. Since $g'\in\mathcal G$ was arbitrary, we conclude that $g\in\mathcal G^*$.

Now, let any $g\in\mathcal G^*$ be given. By assumption, $\exists g^*\in\mathcal G\cap \overline {\mathcal G}^*$. Since $g\in\mathcal G^*$ and $g^*\in\mathcal G$, we obtain that $c(g)\leq c(g^*)$. Now let any $g'$ unconstrained be given. Since $g^*\in\overline {\mathcal G}^*$, we have $c(g^*)\leq c(g')$, whence $c(g)\leq c(g')$. Since $g'$ unconstrained was arbitrary, we conclude that $g\in\overline{\mathcal G}^*$. Since $g\in\mathcal G$ by definition of $\mathcal G^*$, we conclude that $g\in\mathcal G\cap \overline{\mathcal G}^*$.
\end{proof}

\begin{lemma}\label{lemma:firstorder}
Suppose $\e[\abs{\psi}\mid X]<\infty$ almost surely. Then
$$
\overline{\mathcal G}^*=\{g(\cdot):\e\left[\abs{\psi}l'(g(X), \sign(\psi))\mid X\right] = 0~\text{almost surely}\}.
$$
\end{lemma}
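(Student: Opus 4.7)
The plan is to reduce the unconstrained minimization of $\e[|\psi| l(g(X), \sign(\psi))]$ to a pointwise minimization in $g(x)$ via the tower property, and then characterize that pointwise minimum by a first-order condition, using the convexity and smoothness of $l$.

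First, by the tower property we may write
\begin{equation*}
\e[|\psi| l(g(X), \sign(\psi))] = \e\bigl[\,\e[|\psi| l(g(X), \sign(\psi)) \mid X]\,\bigr].
\end{equation*}
Since $g$ is unconstrained, $g$ is a minimizer of the outer expectation if and only if, for almost every $x$, the value $g(x)$ minimizes the inner function $\phi_x(a) := \e[|\psi| l(a, \sign(\psi)) \mid X=x]$ over $a \in \mathbb{R}$. (One direction is immediate; the other follows from a standard measurable-selection argument, or more simply by noting that any $g$ matching an optimal pointwise choice on a set of full $X$-measure achieves the infimum.) The key observation is that $l(\cdot, s)$ is convex and twice continuously differentiable for each $s \in \{-1,+1\}$, so $\phi_x$ is convex, and hence is minimized at $a$ if and only if $\phi'_x(a) = 0$.

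Next I compute $\phi'_x$. By definition $l'(g,s) = 2\sigma(g) - (s+1)$, which is bounded in absolute value by $4$ uniformly in $g$ and $s \in \{-1,+1\}$. Consequently $|\,|\psi| l'(a, \sign(\psi))\,| \le 4|\psi|$, and by the assumption $\e[|\psi|\mid X]<\infty$ almost surely this dominating function is conditionally integrable. Hence dominated convergence justifies exchanging differentiation in $a$ and conditional expectation, yielding
\begin{equation*}
\phi'_x(a) = \e[|\psi|\, l'(a, \sign(\psi)) \mid X=x].
\end{equation*}
By convexity, $a^*$ minimizes $\phi_x$ iff $\phi'_x(a^*) = 0$. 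Specializing to $a^* = g(x)$ and collecting the almost-sure statement over $x$ gives exactly
\begin{equation*}
g \in \overline{\mathcal G}^* \;\Longleftrightarrow\; \e[|\psi|\, l'(g(X), \sign(\psi)) \mid X] = 0 \text{ almost surely},
\end{equation*}
as required.

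The main obstacle, and the only nontrivial step, is the exchange of differentiation and conditional expectation; everything else is bookkeeping around the convexity of $l$ and the tower property. That obstacle is handled by the uniform bound $|l'|\le 4$ together with the integrability assumption on $|\psi|$. The pointwise-minimization reduction requires mild care about measurability when asserting that an unconstrained minimizer can be chosen to coincide with a measurable selection of pointwise minimizers, but since the first-order condition itself is measurable in $(x, g(x))$ and characterizes the minimizer uniquely up to the kernel of $\phi''_x$ (which is strictly positive here, as $l''(g,s) = 2\sigma'(g)>0$), no such subtlety actually arises.
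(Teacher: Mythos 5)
Your proof is correct and follows essentially the same route as the paper's: reduce the unconstrained minimization to a pointwise conditional minimization, invoke convexity of $l(\cdot,s)$ to characterize the minimizer by a first-order condition, and justify exchanging differentiation with the conditional expectation via dominated convergence using the uniform bound on $l'$ together with $\e[\abs{\psi}\mid X]<\infty$. The only cosmetic difference is that the paper makes the mean value theorem step in the difference quotient explicit before applying dominated convergence, while you state the exchange directly with the same dominating function; your brief remark on measurable selection is a point the paper leaves implicit.
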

\begin{proof}
Notice that because $g$ is an unconstrained function of $X$ it must minimize the conditional expectation. That is,
$$
\overline{\mathcal G}^*=\{g:g(x)\in\argmin_{z\in\mathbb R}\e[\abs{\psi} l(z, \sign(\psi))\mid X=x]~\text{for a.e. $x$}\}.
$$
Since $\abs{\psi} l(z, \sign(\psi))$ is convex in $z$, so is $\e[\abs{\psi} l(z, \sign(\psi))\mid X=x]$.
Next, note that by mean value theorem, we have
\begin{align*}
&\frac{\partial}{\partial z}\e[\abs{\psi} l(z, \sign(\psi))\mid X=x] \\
&\qquad=\lim_{h\to 0}
\e\left[\abs{\psi}\frac{l(z+h, \sign(\psi))-\abs{\psi} l(z, \sign(\psi))}h\mid X=x\right]
\\&\qquad=\lim_{h\to 0}
\e\left[\abs{\psi}l'(z(h), \sign(\psi))\mid X=x\right]
\end{align*}
for some $z(h)\in[z,z+h]$. Since $\abs{l'(z(h), \sign(\psi))}\leq 4$ and $\e\left[\abs{\psi}\mid X=x\right]<\infty$, dominated convergence theorem yields
\begin{align*}
\frac{\partial}{\partial z}\e[\abs{\psi} l(z, \sign(\psi))\mid X=x]
&=
\e\left[\abs{\psi}\lim_{h\to 0}l'(z(h), \sign(\psi))\mid X=x\right]
\\&=
\e\left[\abs{\psi}l'(z, \sign(\psi))\mid X=x\right].
\end{align*}
We conclude via first-order conditions for unconstrained optimization over $z$ that
$$\overline{\mathcal G}^*=\{g:\e\left[\abs{\psi}l'(g(x), \sign(\psi))\mid X=x\right]=0~\text{for a.e. $x$}\},$$
which is a restatement of the lemma's result.
\end{proof}

We are now prepared to prove \cref{thm:correct-specification,thm:conditoinalmomentproblem}.

\begin{proof}[Proof of \cref{thm:correct-specification}]
Suppose $\theta^*\in\argmin_{\theta\in\Theta}L(\theta)$. 
That is,
$g_{\theta^*}\in\mathcal G^*$. By \cref{lemma:specification},
$g_{\theta^*}\in\overline{\mathcal G}^*$. Then, by \cref{lemma:firstorder},
$\e\left[\abs{\psi}l'(g_{\theta^*}(x), \sign(\psi))\mid X=x\right]=0$ for a.e. $x$. Consider such an $x$.
Note that $l'(g,s)=-s\sigma(-sg)$, and define $a_x = \e[\abs{\psi} \indicator{\psi \geq 0} \mid X=x]$ and $b_x = \e[\abs{\psi} \indicator{\psi < 0} \mid X=x]$.
\begin{align*}
0&=\e\left[\psi\sigma(-\sign(\psi)g_{\theta^*}(x))\mid X=x\right]\\
 &=\sigma(-g_{\theta^*}(x))a_x - \sigma(g_{\theta^*}(x)) b_x \\
 &= a_x - \sigma(g_{\theta^*}(x))(a_x + b_x),
\end{align*}
hence
\begin{align*}
\sigma(g_{\theta^*}(x)) &= \frac{1}{1 + b_x / a_x} \\
g_{\theta^*}(x) &= \log\left(\frac{a_x}{b_x}\right).
\end{align*}
We therefore have, from $\sign(\log(a/b))=\sign(a-b)$,
\begin{align*}
\sign(g_{\theta^*}(x))&=\sign({\e\left[\abs{\psi}\indicator{\psi\geq0}\mid X=x\right]}-{\e\left[\abs{\psi}\indicator{\psi<0}\mid X=x\right]})\\
&=\sign(\e[\psi\mid X=x]).
\end{align*}
The condition that $\sign(g(x))=\sign(\e[\psi\mid X=x])$ for almost every $x$ is exactly equivalent to the condition that $\sign(g(\cdot))\in\max_{\pi~\text{unconstrained}}J(\pi)$ since, by assumption on $\psi$ and iterated expectations, we have that $J(\pi)=\e[\pi(X)(Y(1)-Y(-1))]=\e[\pi(X)\e[Y(1)-Y(-1)\mid X]]=\e[\pi(X)\psi]$.
\end{proof}

\begin{proof}[Proof of \cref{thm:conditoinalmomentproblem}]
First note that \cref{asm:psi-bounded} implies that $\e[\abs{\psi} \mid X=x] < \infty$ almost everywhere, so the conditions of \cref{lemma:firstorder} apply.
Now suppose $\theta^*\in\argmin_{\theta\in\Theta}L(\theta)$. 
By \cref{lemma:specification},
$g_{\theta^*}\in\overline{\mathcal G}^*$. Then, by \cref{lemma:firstorder},
$\e\left[\abs{\psi}l'(g_{\theta^*}(x), \sign(\psi))\mid X=x\right]=0$ for a.e. $x$, which is a restatement of $m(X;\theta^*)=0$ almost surely.

Now suppose $m(X;\theta^*)=0$ almost surely. Then, by \cref{lemma:firstorder}, $g_{\theta^*}\in\overline{\mathcal G}^*$. By definition, $g_{\theta^*}\in{\mathcal G}$. Therefore, by \cref{lemma:specification}, $g_{\theta^*}\in{\mathcal G^*}$, which is a restatement of $\theta^*\in\argmin_{\theta\in\Theta}L(\theta)$.
\end{proof}

\begin{proof}[Proof of \cref{lemma:semimodel}]
Suppose that $\Pi$ is correctly specified for the surrogate loss. Then given \cref{asm:psi-bounded}, there exists $\theta^*$ such that, for each $x$ almost everywhere, we have:
\begin{align*}
    &\e[\abs{\hat\psi} (2 \sigma(g_{\theta^*}(X)) - (\sign(\hat\psi) + 1))  \mid X = x] = 0 \\
    &\iff \e[\abs{\hat\psi} (2 \sigma(g_{\theta^*}(x)) - (\sign(\hat\psi) + 1)) \mid X = x] = 0 \\
    &\iff 2 \sigma(g_{\theta^*}(x)) \e[\abs{\hat\psi} \mid X=x] = 2 P(\hat\psi > 0 \mid X=x) \\
    &\iff \sigma(g_{\theta^*}(x)) = \frac{ P(\hat\psi > 0 \mid X=x) }{ \e[\abs{\hat\psi} \mid X = x] }.
\end{align*}
\end{proof}

\begin{proof}[Proof of \cref{thm:regret-bound}]
Let $S = \sign(\psi)$. We first note that given $\e[\abs{\psi}] < \infty$ from \cref{asm:psi-bounded}, we have that $J$ and $L$ can be re-scaled by a factor of $\e[\abs{\psi}]$ and expressed as the expected values of $\indicator{S = \sign(g_\theta(X))}$ and $\phi(S g_{\theta}(X))$, where $\phi(\alpha) = 2\log(1 + \exp(\alpha)) - 2\alpha$, for some modified distribution of $X,S$ (where the measure $\mu(x,s)$ of $X,S$ is re-scaled by $\e[\abs{\psi} \mid X=x,S=s]$). In addition by our correct specification assumption we have $\regretl(\theta) = L(\theta) - L^*$, where $L^*$ is the minimum loss over all possible unconstrained choices of function $g$. Given the above it follows from \citet[Theorem 3]{bartlett2006convexity} that $w(\regretj) \leq \regretl$ for some non-decreasing, non-negative function $w: [0,1] \to [0,\infty)$, which depends only on the nonnegative loss function $\phi$. Following their notation, define:
\begin{align*}
    H(\eta) &= \inf_{\alpha \in \mathbb R} \eta \phi(\alpha) + (1 - \eta) \phi(-\alpha) \\
    H^-(\eta) &= \inf_{\alpha : \alpha(2 \eta - 1) \leq 0} \eta \phi(\alpha) + (1 - \eta) \phi(-\alpha) \\
    \tilde w(\theta) &= H^-\left(\frac{1+\theta}{2}\right) - H\left(\frac{1+\theta}{2}\right).
\end{align*}
Then it follows from \citet[Section 2]{bartlett2006convexity} that $w$ is the Fenchel-Legendre biconjugate of $\tilde w$. Now it is easy to verify from these definitions that $\tilde w(\theta) = \abs{\theta}$, which is convex, and thus $w(\theta) = \tilde w(\theta) = \abs{\theta}$. The desired result follows immediately from this since $\abs{\regretj} = \regretj$.
\end{proof}

\begin{proof}[Proof of \cref{thm:optimal-regret}]
Given \cref{asm:regular-loss}, from the Taylor expansion from \cref{sec:optimalregret} we have:
\begin{equation*}
    n \regretl(\hat\theta_n) = (\sqrt{n} (\hat\theta_n - \theta^*))^T H(\theta^*) (\sqrt{n} (\hat\theta_n - \theta^*)) + \norm{\sqrt{n}(\hat\theta_n - \theta^*)}^2 o(1)
\end{equation*}

Thus assuming that $\hat\theta_n$ is regular, we let $W$ be the limiting distribution of $\sqrt{n}(\hat\theta_n - \theta^*)$, which by Slutsky's and the continuous mapping theorem gives us that
\begin{equation*}
    \asymregretl(\hat\theta_n) = W^T H(\theta^*) W.
\end{equation*}

Now, by \citet[Theorem 25.20]{van2000asymptotic} we have that $W = N(0, V) * M$, where $M$ is given by some arbitrary distribution, $V$ is the convariance matrix of the semi-parametrically efficient estimator, and $*$ denotes convolution. In addition $M = 0 \ \text{a.s.} \iff \hat\theta_n$ is semi-parametrically efficient. Now let $W^* = N(0, V)$. Then it follows from \citet[Lemma 8.5]{van2000asymptotic} that $\e[\phi(W^T H(\theta^*) W))] \geq \e[\phi(W^{*T} H(\theta^*) W^*)]$ for any $W = W^* * M$, since $l(w) = \phi(w^T H(\theta^*)w)$ is a bowl-shaped loss in the sense of \citet{van2000asymptotic} given that $\phi$ is non-negative and non-decreasing. Thus we can conclude by noting that the efficiency bound is given by $B_{\phi} = \e[\phi(W^{*T} H(\theta^*) W^*)]$, which is clearly realized for any semi-parametrically efficient $\hat\theta_n$.
\end{proof}

\section{Additional Experiment Details}

\subsection{Additional Optimization Details}

\paragraph{Solving \deepgmm Smooth Game} As mentioned in \cref{sec:deepgmm}, we solve the smooth game by running alternating first-order optimization using the OAdam algorithm. We tuned this procedure manually by experimenting on a couple of hand selected synthetic scenarios, one \linearscenario and one \quadraticscenario, prior to running our main experiments. We found generally good results using a learning rate of 0.001 for linear policy networks, and 0.0002 for flexible policy networks, with the learning rate of the critic $f$ network set to 5 times that of the policy netowrk. Furthermore we found good results using a number of epochs given by the fixed rule of $\min(8000000 / n, 8000)$, where $n$ is the number of training data points used.

\paragraph{Optimizing Neural Networks for Nuisance Functions and \finitegmm} In all cases where we optimized neural networks in these problems we used the LBFGS algorithm. Furthermore we performed some additional first-order optimization using Adam to deal with potential cases of poor convergence, using a learning rate of 0.001, and stopping once performance on a held-out validation set (of same size as training set) failed to improve for 5 consecutive epochs.

\subsection{Results for \finitegmm Methods}

We include here results for our \finitegmm method. As mentioned in \cref{sec:experiments}, the results for these methods were poor as expected. In particular the results seem to be very unstable, with extremely poor policy learning in a small percentage of cases, leading to extremely negative RMRR values in all cases except with \quadraticscenario scenario and linear policy network. However even in the majority of cases where these estimators don't have unstable behavior, they seem to perform par with or at best only marginally better than \entropylearning, with the one expection of \quadraticscenario scenario and linear policy network.

In our experiments with \finitegmm we experimented with two different kinds of choices for the set of critic functions $\mathcal F$: (1) polynomial expansion of $X$ of degree $d$; and (2) Random Kitchen Sink (RKS; \citet{rahimi2009weighted}) expansion of $X$ of length $n$ using the Gaussian kernel with $\sigma = 0.5$. Note that the Random Kitchen Sink expansion is designed such that $\phi_n(x_1)^T \phi_n(x_2) \approx K(x_1,x_2)$ for some given kernel, with approximation error vanishing as $n \rightarrow \infty$. In both cases, the function $f_i$ is given by the $i$'th coordinate of the corresponding feature map. We calculated $\hat\theta_n^{\finitegmm}$ using 3 stages, with the guess of $\tilde\theta_0$ in the first stage chosen at random.

In \cref{fig:fgmm-plots} we plot the performance of both \finitegmm and \deepgmm in terms of the RMRR metric, plotting both mean and median values across different values of $n$. Although we experimented with multiple choices of polynomial degree / RKS expansion length, we only plot results for degree 3 polynomials ($\text{Poly}(3)$), and length 64 expansions ($\text{RBF}(64)$) for clarity, as we found these gave the least-worst results.

\begin{figure*}
\begin{center}
\ifpreprint
\begin{tabular}{c}
     \includegraphics[width=0.95\textwidth]{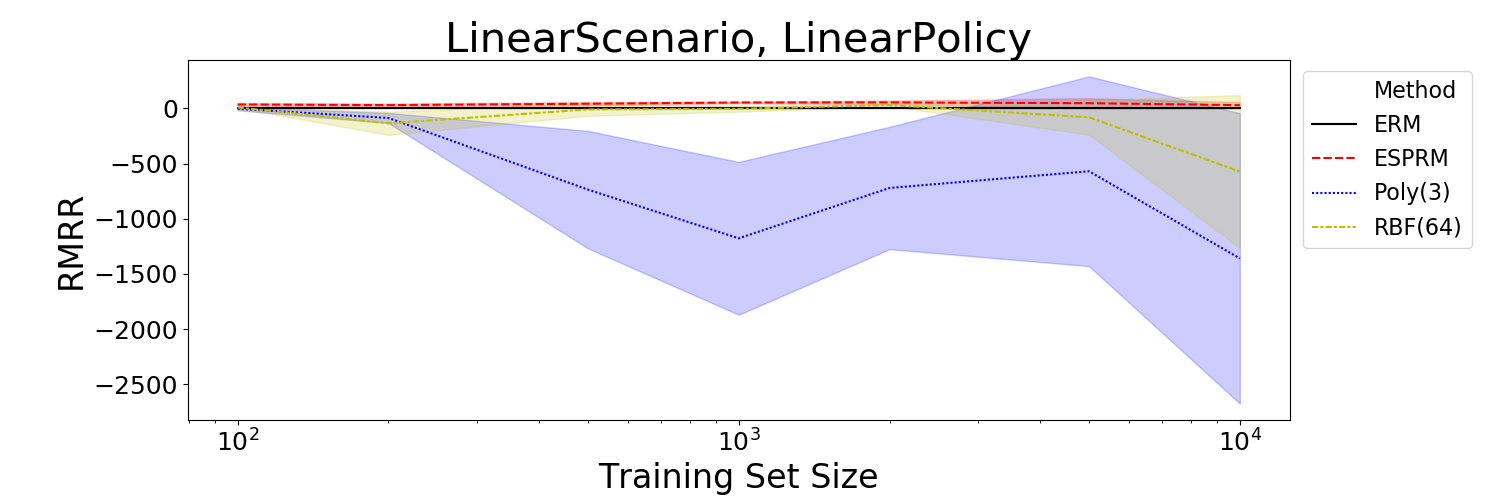} \\
     \includegraphics[width=0.95\textwidth]{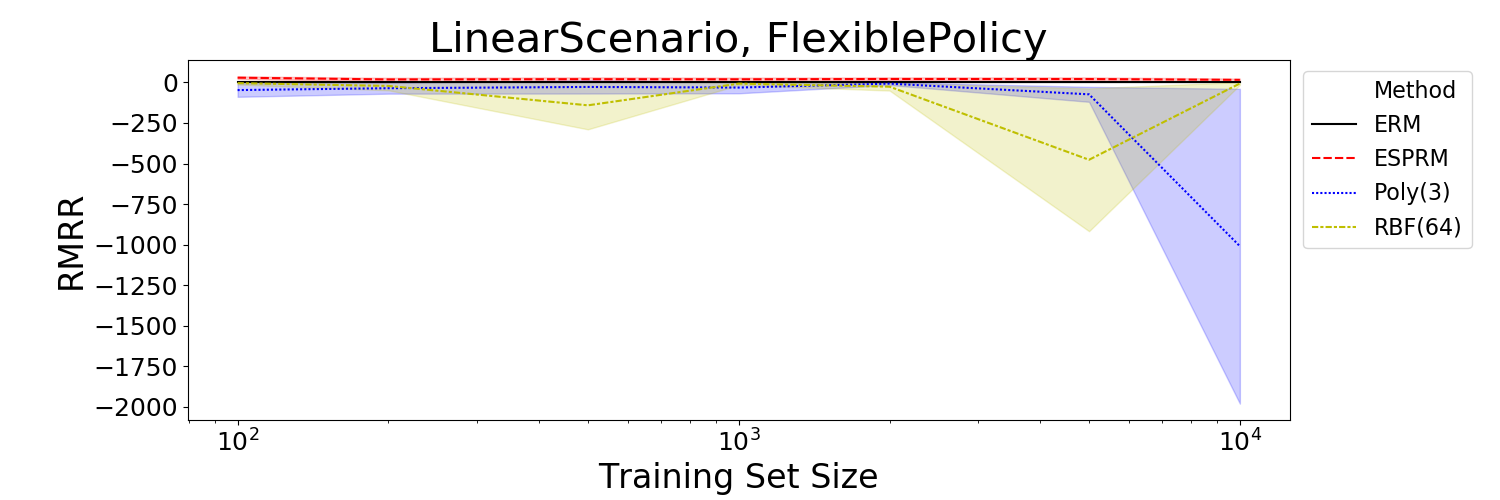} \\
     \includegraphics[width=0.95\textwidth]{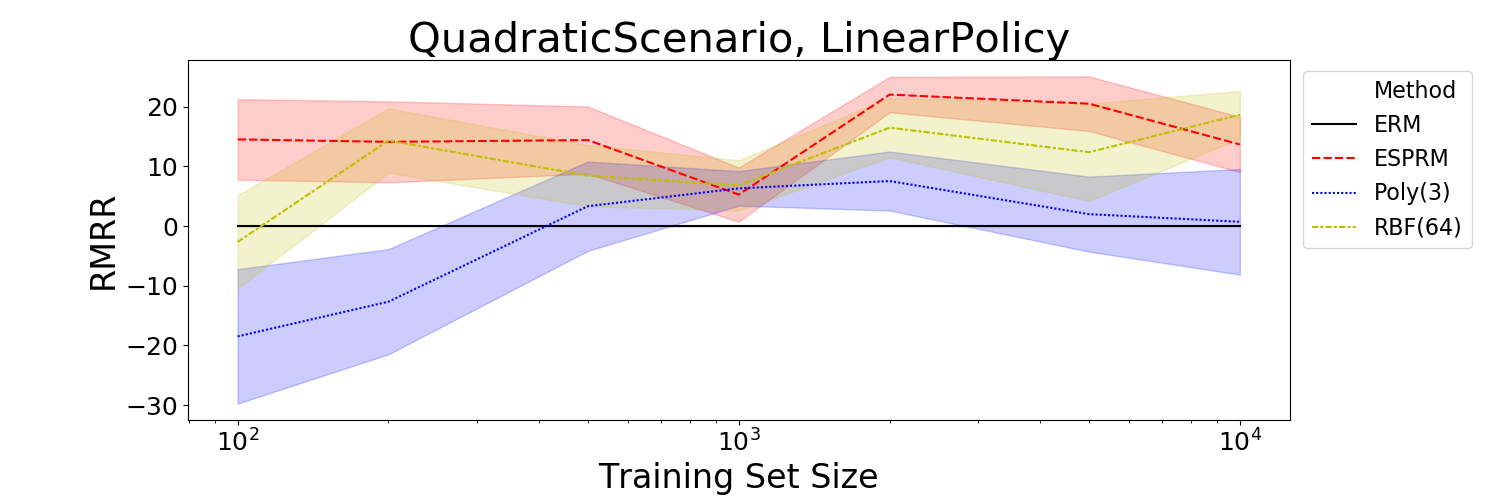} \\
     \includegraphics[width=0.95\textwidth]{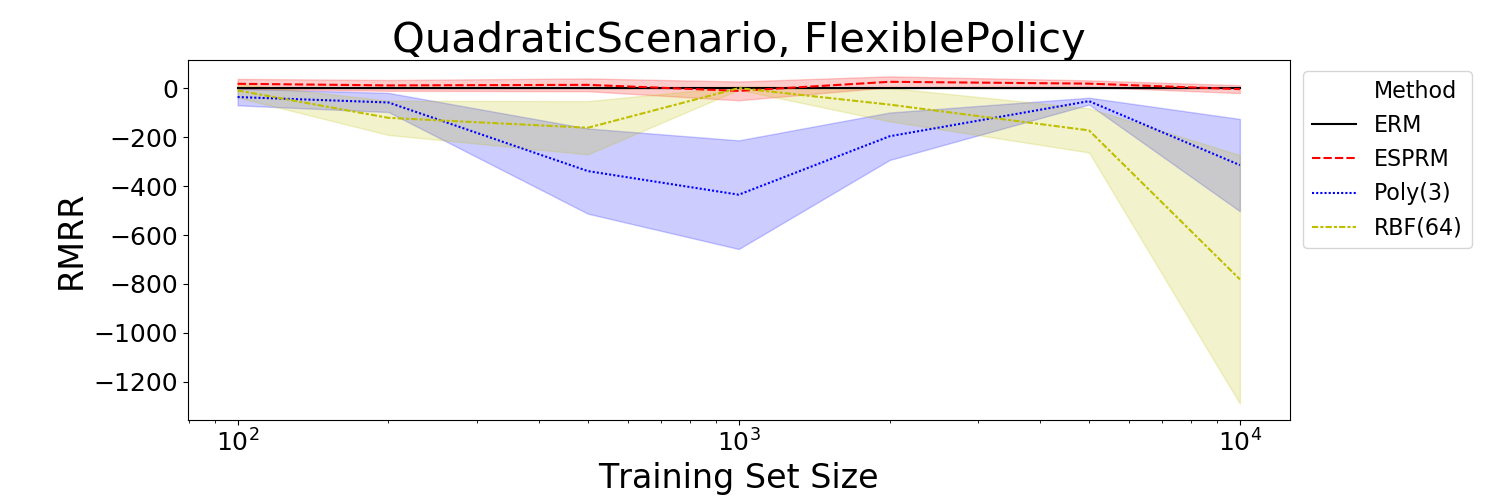}
\end{tabular}
\else
\begin{tabular}{cc}
     \includegraphics[width=0.45\textwidth]{"figs/fgmm_simple_linear"} & \includegraphics[width=0.45\textwidth]{"figs/fgmm_simple_flexible"} \\ 
     \includegraphics[width=0.45\textwidth]{"figs/fgmm_quadratic_linear"} & \includegraphics[width=0.45\textwidth]{"figs/fgmm_quadratic_flexible"}
\end{tabular}
\fi
\end{center}
\caption{Results for \deepgmm, \entropylearning, and \finitegmm methods for all scenarios and policy network types, where nuisances are fit using linear/logistic regression as in main experiments.}
\label{fig:fgmm-plots}
\end{figure*}

\subsection{Additional Results for Flexible Nuisance fitting}

We provide here some additional results for our simulation study on \quadraticscenario where the nuisances were fit using flexible neural network training (using the same neural network architecture as for the flexible policy class) instead of using a correctly specified model. We show results for the \deepgmm and \entropylearning methods in \cref{fig:nuisance-plots} both with nuisance fit using correctly specified model and using flexible neural network model. We note that results are about the same in both cases: for linear policies we have clearly superior results with our method versus \entropylearning, while for flexible policies in both cases the methods are roughly on par with each other, with slight performance increase in favor of \deepgmm for some values of $n$.

\begin{figure*}
\begin{center}
\ifpreprint
\begin{tabular}{c}
     \includegraphics[width=0.95\textwidth]{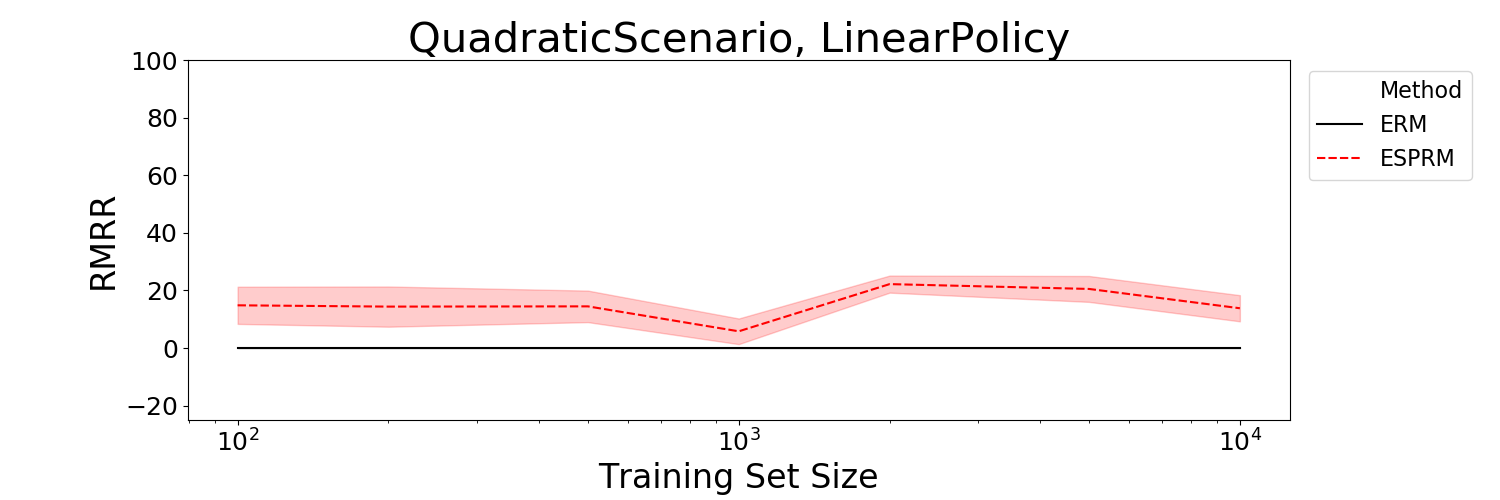} \\ 
     \includegraphics[width=0.95\textwidth]{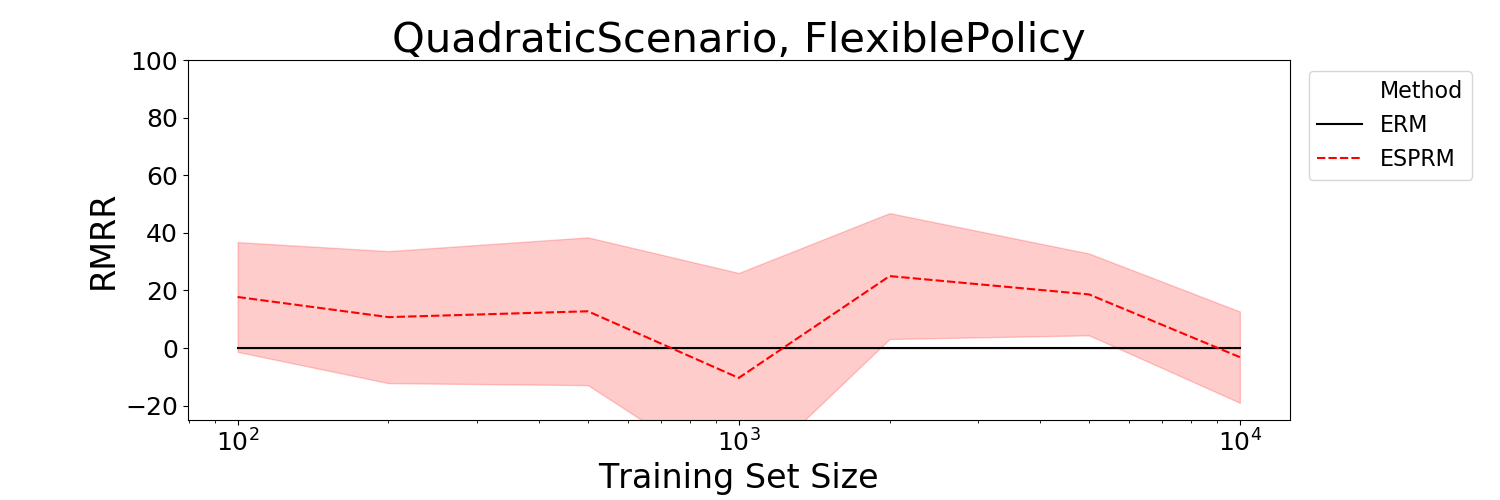} \\ 
     \includegraphics[width=0.95\textwidth]{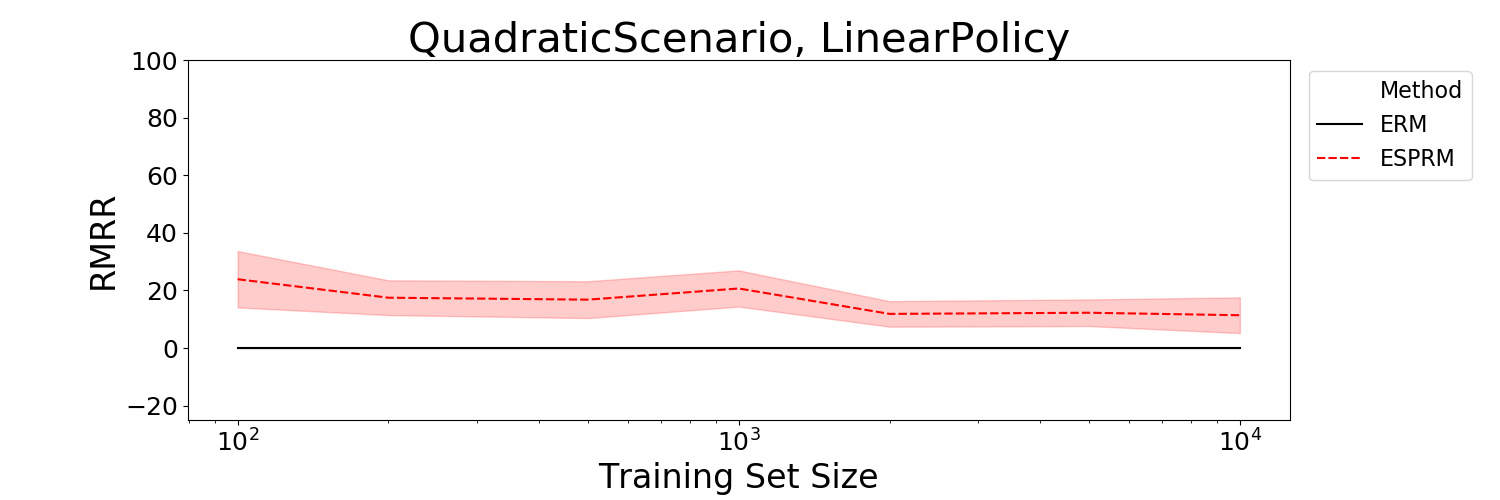} \\
     \includegraphics[width=0.95\textwidth]{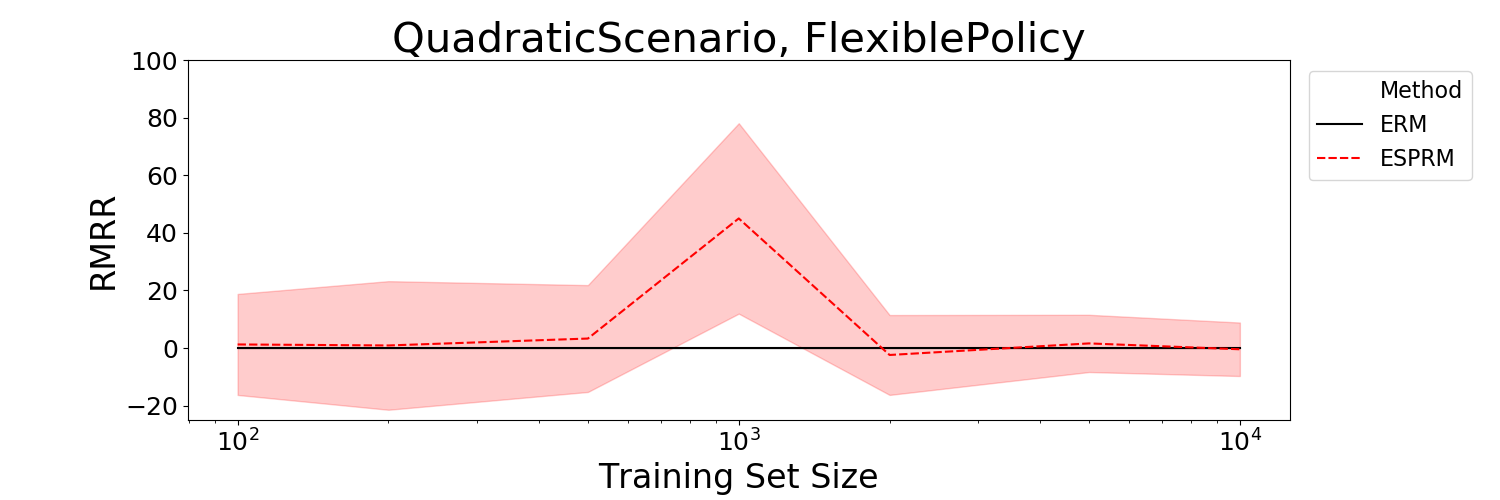}
\end{tabular}
\else
\begin{tabular}{cc}
     \includegraphics[width=0.45\textwidth]{"figs/apx_quadratic_linear"} & \includegraphics[width=0.45\textwidth]{"figs/apx_quadratic_flexible"} \\ 
     \includegraphics[width=0.45\textwidth]{"figs/apx_nuisance_quadratic_linear"} & \includegraphics[width=0.45\textwidth]{"figs/apx_nuisance_quadratic_flexible"}
\end{tabular}
\fi
\end{center}
\caption{Results for both \deepgmm and \entropylearning methods for \quadraticscenario, where in top row results obtained by fitting correctly specified nuisance model, while in bottom row results fit using flexible nueral network nuisance model.}
\label{fig:nuisance-plots}
\end{figure*}

\end{document}